\title{Accelerated Stochastic Gradient Descent for Minimizing Finite Sums}
\author{
Atsushi Nitanda \\
NTT DATA Mathematical Systems Inc.\\
Tokyo, Japan\\
\texttt{nitanda@msi.co.jp} \\
}
\newtheorem{theorem}{Theorem}
\newtheorem{lemma}{Lemma}
\newtheorem{assumption}{Assumption}
\newtheorem*{lemma_appendix}{Lemma A}
\begin{document}

\maketitle
\begin{abstract}
We propose an optimization method for minimizing the finite sums of smooth convex functions.
Our method incorporates an accelerated gradient descent (AGD) and a stochastic variance reduction gradient (SVRG) in a mini-batch setting.
Unlike SVRG, our method can be directly applied to non-strongly and strongly convex problems.
We show that our method achieves a lower overall complexity than the recently proposed methods that supports non-strongly convex problems.
Moreover, this method has a fast rate of convergence for strongly convex problems.
Our experiments show the effectiveness of our method.
\end{abstract}

\section{Introduction}
We consider the minimization problem:
\begin{equation}
\underset{x \in \mathbb{R}^d}{\mathrm{minimize}}\ \ f(x) \overset{\mathrm{def}}{=} \frac{1}{n} \sum_{i=1}^{n} f_i(x), \label{opt_prob}
\end{equation} 
where $f_1,\ldots,f_n$ are smooth convex functions from $\mathbb{R}^d$ to $\mathbb{R}$.
In machine learning, we often encounter optimization problems of this type, i.e., empirical risk minimization. 
For example, given a sequence of training examples $(a_1,b_1),\ldots,(a_n,b_n)$, 
where $a_i \in \mathbb{R}^d$ and $b_i \in \mathbb{R}$. If we set $f_i(x)=\frac{1}{2}(a_i^Tx-b_i)^2$, then we obtain linear regression.
If we set $f_i(x)=\log(1+\exp(-b_ix^Ta_i))$ $(b_i\in \{-1,1\}$), then we obtain logistic regression.
Each $f_i(x)$ may include smooth regularization terms.
In this paper we make the following assumption.

\begin{assumption} \label{assumption_l_smooth}
Each convex function $f_i(x)$ is $L$-smooth, i.e., there exists $L>0$ such that for all $x,y\in \mathbb{R}^d$,
\begin{equation}
\| \nabla f_i(x) - \nabla f_i(y) \| \leq L\| x-y \|. \label{L-smooth} \nonumber
\end{equation}
\end{assumption}

In part of this paper (the latter half of section 4), we also assume that $f(x)$ is $\mu$-strongly convex.

\begin{assumption} \label{assumption_strongly_convex}
f(x) is $\mu$-strongly convex, i.e., there exists $\mu>0$ such that for all $x,y \in \mathbb{R}^d$,
\begin{equation}
f(x) \geq f(y) + (\nabla f(y), x-y) + \frac{\mu}{2}\| x-y \|^2. \nonumber
\end{equation}
Note that it is obvious that $L \geq \mu$. 
\end{assumption}

Several papers recently proposed effective methods (SAG\cite{RSB2012,SRB2013}, SDCA\cite{SZ2012,SZ2013a}, SVRG\cite{JZ2013}, S2GD\cite{KR2013}, Acc-Prox-SDCA\cite{SZ2014}, 
Prox-SVRG\cite{XZ2014}, MISO\cite{M2015}, SAGA\cite{DBL2014}, Acc-Prox-SVRG\cite{Nit2014}, mS2GD\cite{KLR2015}) for solving problem (\ref{opt_prob}). 
These methods attempt to reduce the variance of the stochastic gradient and achieve the linear convergence rates like a deterministic gradient descent when $f(x)$ is strongly convex.
Moreover, because of the computational efficiency of each iteration, the overall complexities (total number of component gradient evaluations to find an $\epsilon$-accurate solution in expectation) 
of these methods are less than those of the deterministic and stochastic gradient descent methods.

An advantage of the SAG and SAGA is that they support non-strongly convex problems.
Although we can apply any of these methods to non-strongly convex functions by adding a slight $L_2$-regularization, this modification increases the difficulty of model selection.
In the non-strongly convex case, the overall complexities of SAG and SAGA are $O((n+L)/\epsilon)$. 
This complexity is less than that of the deterministic gradient descent, which have a complexity of $O(nL/\epsilon)$, and is a trade-off with $O(n\sqrt{L/\epsilon})$ ,
which is the complexity of the AGD.

In this paper we propose a new method that incorporates the AGD and SVRG in a mini-batch setting like Acc-Prox-SVRG \cite{Nit2014}.
The difference between our method and Acc-Prox-SVRG is that our method incorporates \cite{AZO2015}, which is similar to Nesterov's acceleration \cite{Nes2005}, 
whereas Acc-Prox-SVRG incorporates \cite{Nes2004}.
Unlike SVRG and Acc-Prox-SVRG, our method is directly applicable to non-strongly convex problems and achieves an overall complexity of
\begin{equation}
\tilde{O}\left( n+ \min \left\{ \frac{L}{\epsilon}, n\sqrt{ \frac{L}{\epsilon}}\ \right\} \right),  \nonumber
\end{equation}
where the notation $\tilde{O}$ hides constant and logarithmic terms.
This complexity is less than that of SAG, SAGA, and AGD.
Moreover, in the strongly convex case, our method achieves a complexity 
\begin{equation}
\tilde{O}\left( n + \min \left\{ \kappa,\ n\sqrt{\kappa}\ \right\} \right), \nonumber
\end{equation}
\vspace{1mm}
where $\kappa$ is the condition number $L/\mu$. 
This complexity is the same as that of Acc-Prox-SVRG.
Thus, our method converges quickly for non-strongly and strongly convex problems.

In Section 2 and 3, we review the recently proposed accelerated gradient method \cite{AZO2015} and the stochastic variance reduction gradient \cite{JZ2013}.
In Section 4, we describe the general scheme of our method and prove an important lemma that gives us a novel insight for constructing specific algorithms.
Moreover, we derive an algorithm that is applicable to non-strongly and strongly convex problems and show its quickly converging complexity.
Our method is a multi-stage scheme like SVRG, but it can be difficult to decide when we should restart a stage.
Thus, in Section 5, we introduce some heuristics for determining the restarting time.
In Section 6, we present experiments that show the effectiveness of our method.

\section{Accelerated Gradient Descent}
We first introduce some notations. In this section, $\|\cdot\|$ denotes the general norm on $\mathbb{R}^d$. 
Let $d(x): \mathbb{R}^d \rightarrow \mathbb{R}$ be a distance generating function (i.e., 1-strongly convex smooth function with respect to $\|\cdot\|$).
Accordingly, we define the Bregman divergence by
\begin{equation}
V_x(y) = d(y) - \left( d(x) + (\nabla d(x), y-x ) \right), \ \ \forall x, \forall y \in \mathbb{R}^d, \nonumber
\end{equation}
where $(,)$ is the Euclidean inner product.
The accelerated method proposed in \cite{AZO2015} uses a gradient step and mirror descent steps and 
takes a linear combination of these points. That is, 
\begin{eqnarray}
&(Convex\ Combination)&x_{k+1}\leftarrow \tau_kz_k + (1-\tau_k)y_k, \nonumber \\
&(Gradient\ Descent)&y_{k+1}\leftarrow \arg \min_{y\in\mathbb{R}^d} \left\{\ (\nabla f(x_{k+1}),y-x_{k+1}) + \textstyle \frac{L}{2}\|y-x_{k+1}\|^2 \ \right\}, \nonumber \\
&(Mirror\ Descent)&z_{k+1}\leftarrow \arg \min_{z\in\mathbb{R}^d} \left\{\ \alpha_{k+1} (\nabla f(x_{k+1}),z-z_k) + V_{z_k}(z) \ \right\}. \nonumber
\end{eqnarray}
Then, with appropriate parameters, $f(y_k)$ converge to the optimal value as fast as the Nesterov's accelerated methods \cite{Nes2005,Nes2004} for non-strongly convex problems.
Moreover, in the strongly convex case, we obtain the same fast convergence as Nesterov's methods by restarting this entire procedure.

In the rest of the paper, we only consider the Euclidean norm, i.e., $\|\cdot\| = \|\cdot\|_2$.
\section{Stochastic Variance Reduction Gradient}
To ensure the convergence of stochastic gradient descent (SGD), the learning rate must decay to zero 
so that we can reduce the variance effect of the stochastic gradient. This slows down the convergence.
Variance reduction techniques \cite{JZ2013,XZ2014,KR2013,KLR2015} such as SVRG have been proposed to solve this problem.
We review SVRG in a mini-batch setting \cite{Nit2014,KLR2015}.
SVRG is a multi-stage scheme. During each stage, this method performs $m$ SGD iterations using the following direction,
\[ v_k = \nabla f_{I_k}(x_k)-\nabla f_{I_k}(\tilde{x}) + \nabla f(\tilde{x}), \]
where $\tilde{x}$ is a starting point at stage, $k$ is an iteration index, $I_k=\{i_1,\ldots,i_b\}$ is a uniformly randomly chosen size $b$ subset of $\{1,2,\ldots,n\}$, and 
$f_{I_k}=\frac{1}{b}\sum_{j=1}^b f_{i_j}$.
Note that $v_k$ is an unbiased estimator of gradient $\nabla f(x_k)$: $\mathbb{E}_{I_k}\left[v_k\right] = \nabla f(x_k)$, where $\mathbb{E}_{I_k}$ 
denote the expectation with respect to $I_k$.
A bound on the variance of $v_k$ is given in the following lemma, which is proved in the Supplementary Material.
\begin{lemma} \label{svrg_bound_lemma} Suppose Assumption \ref{assumption_l_smooth} holds, and let $x_*=\underset{ x \in \mathbb{R}^d }{\arg \min}f(x)$. 
Conditioned on $x_{k}$, we have
\begin{equation}
\mathbb{E}_{I_k}\| v_k - \nabla f(x_k) \|^2 \leq \displaystyle 4L\frac{n-b}{b(n-1)} \left(f(x_k)-f(x_*)+f(\tilde{x})-f(x_*) \right). \label{svrg_bound}
\end{equation}
\end{lemma}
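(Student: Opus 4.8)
The plan is to first identify the error $v_k - \nabla f(x_k)$ as the fluctuation of a without-replacement sample mean about its population mean, and then to bound the resulting population variance by the function-value gaps via smoothness. Writing $\zeta_i = \nabla f_i(x_k) - \nabla f_i(\tilde{x})$ and noting that $\nabla f(x_k) - \nabla f(\tilde{x}) = \frac{1}{n}\sum_{i=1}^n \zeta_i =: \bar{\zeta}$, the definition of $v_k$ gives
\begin{equation}
v_k - \nabla f(x_k) = \frac{1}{b}\sum_{j=1}^b \zeta_{i_j} - \bar{\zeta}, \nonumber
\end{equation}
so the quantity to control is exactly the squared deviation of the size-$b$ sample mean of $\zeta_1,\ldots,\zeta_n$ from their average.

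The first and key step is to evaluate this deviation exactly for uniform sampling without replacement. Setting $w_i = \zeta_i - \bar{\zeta}$ so that $\sum_{i=1}^n w_i = 0$, I would expand $\mathbb{E}_{I_k}\| \frac{1}{b}\sum_{j=1}^b w_{i_j} \|^2$ into its $b$ diagonal and $b(b-1)$ off-diagonal contributions. Marginally each $i_j$ is uniform, so $\mathbb{E}_{I_k}\|w_{i_j}\|^2 = \frac{1}{n}\sum_{i=1}^n\|w_i\|^2$, whereas each distinct pair $(i_j,i_l)$ is uniform over ordered distinct pairs, giving $\mathbb{E}_{I_k}(w_{i_j},w_{i_l}) = \frac{1}{n(n-1)}\sum_{p\neq q}(w_p,w_q) = \frac{-1}{n(n-1)}\sum_{i=1}^n\|w_i\|^2$, where the last equality uses $\| \sum_p w_p \|^2 = 0$. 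Collecting these terms and dividing by $b^2$ produces the finite-population correction factor and yields
\begin{equation}
\mathbb{E}_{I_k}\| v_k - \nabla f(x_k) \|^2 = \frac{n-b}{b(n-1)}\cdot\frac{1}{n}\sum_{i=1}^n \|w_i\|^2 \leq \frac{n-b}{b(n-1)}\cdot\frac{1}{n}\sum_{i=1}^n \|\zeta_i\|^2, \nonumber
\end{equation}
where the inequality merely drops the subtracted mean, since a variance never exceeds the corresponding second moment.

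It then remains to bound $\frac{1}{n}\sum_{i=1}^n \|\nabla f_i(x_k) - \nabla f_i(\tilde{x})\|^2$ by the value gaps. Applying $\|a-b\|^2 \leq 2\|a\|^2 + 2\|b\|^2$ with $a = \nabla f_i(x_k) - \nabla f_i(x_*)$ and $b = \nabla f_i(\tilde{x}) - \nabla f_i(x_*)$ splits each term into deviations measured against the optimum. For each $L$-smooth convex $f_i$ the standard inequality $\|\nabla f_i(x) - \nabla f_i(x_*)\|^2 \leq 2L(f_i(x) - f_i(x_*) - (\nabla f_i(x_*), x - x_*))$ holds; averaging over $i$ collapses the linear term into $(\nabla f(x_*), \cdot)$, which vanishes because $\nabla f(x_*) = 0$. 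This gives $\frac{1}{n}\sum_{i=1}^n\|\nabla f_i(x_k) - \nabla f_i(x_*)\|^2 \leq 2L(f(x_k) - f(x_*))$, and the analogous bound for $\tilde{x}$; combined with the factor of $2$ from the splitting, these produce the claimed $4L$ multiple. I expect the main obstacle to be the exact without-replacement variance computation: one must track the off-diagonal covariance carefully, since it is precisely the negative correlation between distinct samples that sharpens the naive factor $\frac{1}{b}$ (valid for sampling with replacement) to $\frac{n-b}{b(n-1)}$. The subsequent smoothness bound is routine once $\nabla f(x_*) = 0$ is invoked.
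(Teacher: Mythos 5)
Your proposal is correct and follows essentially the same route as the paper: the exact without-replacement variance identity $\mathbb{E}_{I_k}\|\frac{1}{b}\sum_{j}w_{i_j}\|^2 = \frac{n-b}{b(n-1)}\cdot\frac{1}{n}\sum_i\|w_i\|^2$ (the paper's Lemma A, which it proves by counting occurrences over subsets rather than via marginal and pairwise expectations, but the computation is the same), followed by the $4L$ bound on the second moment. The only difference is that the paper obtains that second bound by citing Corollary 3 of Xiao--Zhang, whereas you reprove it inline via the splitting $\|a-b\|^2\le 2\|a\|^2+2\|b\|^2$ and the co-coercivity inequality with $\nabla f(x_*)=0$, which is exactly the content of the cited result.
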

Due to this lemma, SVRG with $b=1$ achieves a complexity of $O ( ( n + \kappa )\log \frac{1}{\epsilon} )$.

\section{Algorithms}
We now introduce our {\it Accelerated efficient Mini-batch SVRG (AMSVRG)} which incorporates AGD and SVRG in a mini-batch setting. 
Our method is a multi-stage scheme similar to SVRG. During each stage, this method performs several APG-like \cite{AZO2015} iterations and uses SVRG direction in a mini-batch setting. 
Each stage of AMSVRG is described in Figure \ref{each_stage}. 

\begin{figure}[h]
\begin{center}
\fbox{\rule{0cm}{0cm} 
\begin{tabular}{l}
{\bf Algorithm 1}$(y_0,\ z_0,\ m,\ \eta,\ (\alpha_{k+1})_{k\in \mathbb{Z}_+},\ (b_{k+1})_{k\in \mathbb{Z}_+},\ (\tau_k)_{k\in \mathbb{Z}_+} )$ 
\\ \hline \\
$\tilde{v} \leftarrow \frac{1}{n} \sum_{i=1}^n\nabla f_i(y_0)$ \\
{\bf for} $k \leftarrow 0$ {\bf to} $m$ \\
\ \ \ \ \ \ $x_{k+1} \leftarrow (1-\tau_k)y_k + \tau_k z_k$ \\
\ \ \ \ \ \ Randomly pick subset $I_{k+1} \subset \left\{ 1,2,\ldots,n \right\}$ of size $b_{k+1}$ \\
\ \ \ \ \ \ $v_{k+1} \leftarrow \nabla f_{I_{k+1}}(x_{k+1}) - \nabla f_{I_{k+1}}(y_0) + \tilde{v}$ \\
\ \ \ \ \ \ $y_{k+1} \leftarrow \arg \min_{y\in\mathbb{R}^d} \left\{\ \eta(v_{k+1},y-x_{k+1}) + \frac{1}{2}\|y-x_{k+1}\|^2 \ \right\}\ \ (SGD\ step)$\\
\ \ \ \ \ \ $z_{k+1} \leftarrow \arg \min_{z\in\mathbb{R}^d} \left\{\ \alpha_{k+1} (v_{k+1},z-z_k) + V_{z_k}(z) \ \right\}\ \ \ \ \ \ \ \ \ \ \ (SMD\ step)$\\
{\bf end} \\
{\bf Option I: }  
\ \ \ \ \ \ Return $y_{m+1}$ \\
{\bf Option II: } 
\ \ \ \ \ Return $\frac{1}{m+1}\sum_{k=1}^{m+1}x_{k}$ \\
\end{tabular}
\rule{0cm}{0cm}}
\end{center}
\caption{Each stage of AMSVRG}
\label{each_stage}
\end{figure}

\subsection{Convergence analysis of the single stage of AMSVRG}
Before we introduce the multi-stage scheme, we show the convergence of Algorithm 1.
The following lemma is the key to the analysis of our method and gives us an insight on how to construct algorithms.

\begin{lemma} \label{key_lemma}
Consider Algorithm 1 in Figure \ref{each_stage} under Assumption \ref{assumption_l_smooth}. 
We set $\delta_{k} = \frac{n-b_{k}}{b_{k}(n-1)}$.
Let $x_*\in \arg \min_{x \in \mathbb{R}^d} f(x)$. 
If $\eta = \frac{1}{L}$, then we have,
\begin{eqnarray*}
&&\hspace{-15mm}\sum_{k=0}^{m} \alpha_{k+1} \left( \frac{1}{\tau_k} - (1+4\delta_{k+1})L\alpha_{k+1} \right) \mathbb{E}[f(x_{k+1})-f(x_*)]  + L\alpha_{m+1}^2 \mathbb{E}[ f(y_{m+1})  - f(x_*) ]\\ \nonumber
&&\leq V_{z_0}(x_*)  + \sum_{k=1}^{m} \left( \alpha_{k+1} \frac{1-\tau_k}{\tau_k} - L\alpha_k^2 \right)\mathbb{E}[ f(y_k) - f(x_*) ] \\
&&\ \ \ \ + \left( \alpha_1 \frac{1-\tau_0}{\tau_0} + 4L \sum_{k=0}^m \alpha_{k+1}^2 \delta_{k+1} \right)( f(y_0) - f(x_*) ).
\end{eqnarray*}
\end{lemma}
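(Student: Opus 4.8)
The plan is to follow the ``linear coupling'' template of \cite{AZO2015}: combine a one-step progress bound coming from the SGD (gradient) step with a regret bound coming from the SMD (mirror) step, and then sum over $k$ so that the Bregman terms telescope. The only genuinely new ingredient relative to the deterministic analysis is the presence of the stochastic direction $v_{k+1}$ in place of the exact gradient $\nabla f(x_{k+1})$, and this is precisely where Lemma \ref{svrg_bound_lemma} enters. Throughout I would work conditionally on $x_{k+1}$, use unbiasedness $\mathbb{E}_{I_{k+1}}[v_{k+1}] = \nabla f(x_{k+1})$, and take total expectations only at the end.

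First I would analyze the SGD step. Since it reduces to $y_{k+1} = x_{k+1} - \eta v_{k+1}$ with $\eta = 1/L$, $L$-smoothness of $f$ gives $f(y_{k+1}) \leq f(x_{k+1}) - \frac{1}{L}(\nabla f(x_{k+1}), v_{k+1}) + \frac{1}{2L}\|v_{k+1}\|^2$. Taking expectation and writing $\mathbb{E}\|v_{k+1}\|^2 = \|\nabla f(x_{k+1})\|^2 + \mathbb{E}\|v_{k+1} - \nabla f(x_{k+1})\|^2$ yields the progress inequality $\frac{1}{2L}\|\nabla f(x_{k+1})\|^2 \leq f(x_{k+1}) - \mathbb{E}[f(y_{k+1})] + \frac{1}{2L}\mathbb{E}\|v_{k+1}-\nabla f(x_{k+1})\|^2$. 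Feeding this back into the same second-moment split lets me bound the quantity $\frac{\alpha_{k+1}^2}{2}\mathbb{E}\|v_{k+1}\|^2$ (which is exactly what the mirror step will generate) by $L\alpha_{k+1}^2(f(x_{k+1}) - \mathbb{E}[f(y_{k+1})]) + \alpha_{k+1}^2\,\mathbb{E}\|v_{k+1}-\nabla f(x_{k+1})\|^2$. Finally I would invoke Lemma \ref{svrg_bound_lemma} with $\tilde{x} = y_0$ to replace the variance term by $4L\alpha_{k+1}^2\delta_{k+1}(f(x_{k+1}) - f(x_*) + f(y_0) - f(x_*))$; the factor $4\delta_{k+1}$ here is what eventually produces the $(1+4\delta_{k+1})$ coefficient in the statement.

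Next I would analyze the SMD step via the standard Bregman prox (three-point) inequality: for any $u$, $\alpha_{k+1}(v_{k+1}, z_k - u) \leq V_{z_k}(u) - V_{z_{k+1}}(u) + \frac{\alpha_{k+1}^2}{2}\|v_{k+1}\|^2$, which follows from the optimality of $z_{k+1}$, the bound $V_{z_k}(z_{k+1}) \geq \frac{1}{2}\|z_{k+1}-z_k\|^2$, and Young's inequality. Taking $u = x_*$ and splitting $z_k - x_* = (z_k - x_{k+1}) + (x_{k+1} - x_*)$, the convex-combination rule $x_{k+1} = (1-\tau_k)y_k + \tau_k z_k$ gives $z_k - x_{k+1} = \frac{1-\tau_k}{\tau_k}(x_{k+1} - y_k)$. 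After taking expectations, convexity of $f$ combined with unbiasedness turns the two inner products into $f(x_{k+1})-f(x_*)$ and $\frac{1-\tau_k}{\tau_k}(f(x_{k+1})-f(y_k))$, respectively. Substituting the SGD bound for $\frac{\alpha_{k+1}^2}{2}\mathbb{E}\|v_{k+1}\|^2$ and collecting the coefficient of $f(x_{k+1})-f(x_*)$ (which simplifies to $\alpha_{k+1}(\frac{1}{\tau_k} - (1+4\delta_{k+1})L\alpha_{k+1})$) leaves the single-step inequality
\begin{align*}
&\alpha_{k+1}\left(\tfrac{1}{\tau_k} - (1+4\delta_{k+1})L\alpha_{k+1}\right)\mathbb{E}[f(x_{k+1})-f(x_*)] + L\alpha_{k+1}^2\,\mathbb{E}[f(y_{k+1})-f(x_*)] \\
&\quad \leq \mathbb{E}[V_{z_k}(x_*)-V_{z_{k+1}}(x_*)] + \alpha_{k+1}\tfrac{1-\tau_k}{\tau_k}\mathbb{E}[f(y_k)-f(x_*)] + 4L\alpha_{k+1}^2\delta_{k+1}(f(y_0)-f(x_*)).
\end{align*}

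Summing from $k=0$ to $m$ then telescopes the Bregman part, $\sum_{k=0}^m (V_{z_k}(x_*) - V_{z_{k+1}}(x_*)) \leq V_{z_0}(x_*)$; reindexing $\sum_k L\alpha_{k+1}^2 f(y_{k+1})$ as $\sum_{j=1}^{m+1} L\alpha_j^2 f(y_j)$ and moving the intermediate terms to the right-hand side recovers the claimed inequality, with the endpoints of the reindexed sums isolating the $f(y_0)$ and $f(y_{m+1})$ terms. The delicate part is the bookkeeping rather than any single estimate: one must verify that the extra $\frac{1}{L}\|\nabla f(x_{k+1})\|^2$ created by passing from the exact to the stochastic gradient is fully absorbed by the gradient-descent progress (so that the coupling between the two steps still closes with the $L\alpha_{k+1}^2$ weight), and one must track the $\delta_{k+1}$ contributions so they aggregate correctly into the $4L\sum_{k=0}^m \alpha_{k+1}^2\delta_{k+1}$ coefficient on $f(y_0)-f(x_*)$.
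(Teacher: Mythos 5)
Your proposal is correct and follows essentially the same route as the paper: the same gradient-descent progress bound and mirror-descent three-point inequality, the same use of Lemma \ref{svrg_bound_lemma} with $\tilde{x}=y_0$ to control the variance term, the same convexity split $z_k - x_* = \frac{1-\tau_k}{\tau_k}(x_{k+1}-y_k) + (x_{k+1}-x_*)$, and the same telescoping and reindexing. Your displayed single-step inequality is exactly what the paper sums over $k$, so there is nothing to add.
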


To prove Lemma \ref{key_lemma}, additional lemmas are required, which are proved in the Supplementary Material.

\begin{lemma} \label{gradient_descent_inequality}
(Stochastic Gradient Descent).\ 
Suppose Assumption \ref{assumption_l_smooth} holds, and let $\eta=\frac{1}{L}$.  Conditioned on $x_k$, it follows that for $k \geq 1$,
\begin{equation} 
\mathbb{E}_{I_k}[f(y_k)] \leq f(x_k) - \frac{1}{2L}\|\nabla f(x_k)\|^2 + \frac{1}{2L}\mathbb{E}_{I_k}\|v_k-\nabla f(x_k)\|^2. \label{gd_ineq}
\end{equation}
\end{lemma}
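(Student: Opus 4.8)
The plan is to exploit the fact that with $\eta = \frac{1}{L}$ the SGD step is an explicit gradient-type update, and then combine the $L$-smoothness of $f$ with the unbiasedness of $v_k$. First I would solve the minimization defining $y_k$: the objective $\eta(v_k, y - x_k) + \frac{1}{2}\|y - x_k\|^2$ is a strongly convex quadratic in $y$, so setting its gradient to zero gives the closed form $y_k = x_k - \eta v_k = x_k - \frac{1}{L} v_k$; in particular $y_k - x_k = -\frac{1}{L} v_k$.

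Next, since each $f_i$ is $L$-smooth, the average $f = \frac{1}{n}\sum_i f_i$ is $L$-smooth as well, so the standard descent lemma yields
\begin{equation}
f(y_k) \leq f(x_k) + (\nabla f(x_k), y_k - x_k) + \frac{L}{2}\|y_k - x_k\|^2. \nonumber
\end{equation}
Substituting the explicit value of $y_k - x_k$ turns the right-hand side into $f(x_k) - \frac{1}{L}(\nabla f(x_k), v_k) + \frac{1}{2L}\|v_k\|^2$, which is a deterministic function of the random vector $v_k$ once we condition on $x_k$.

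Then I would take $\mathbb{E}_{I_k}[\cdot]$ conditioned on $x_k$. The linear term is handled by unbiasedness, $\mathbb{E}_{I_k}[v_k] = \nabla f(x_k)$, contributing $-\frac{1}{L}\|\nabla f(x_k)\|^2$. For the quadratic term I would invoke the bias--variance identity $\mathbb{E}_{I_k}\|v_k\|^2 = \|\nabla f(x_k)\|^2 + \mathbb{E}_{I_k}\|v_k - \nabla f(x_k)\|^2$, which holds precisely because $v_k$ is unbiased. Collecting the two $\|\nabla f(x_k)\|^2$ contributions via $-\frac{1}{L} + \frac{1}{2L} = -\frac{1}{2L}$ gives exactly the claimed inequality.

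There is no substantive obstacle here; the argument is a short computation. The only points to keep straight are that the choice $\eta = \frac{1}{L}$ makes the smoothness constant cancel against the step size so that the variance term inherits the clean coefficient $\frac{1}{2L}$, and that the variance decomposition is applied correctly so that the leftover $\|\nabla f(x_k)\|^2$ term arising from $\mathbb{E}_{I_k}\|v_k\|^2$ combines with the cross term. Everything else is routine.
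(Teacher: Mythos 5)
Your proposal is correct and follows essentially the same route as the paper's own proof: solve the SGD subproblem to get $y_k = x_k - \frac{1}{L}v_k$, apply the descent lemma for the $L$-smooth function $f$, then take the conditional expectation using unbiasedness and the decomposition $\mathbb{E}_{I_k}\|v_k\|^2 = \|\nabla f(x_k)\|^2 + \mathbb{E}_{I_k}\|v_k - \nabla f(x_k)\|^2$. No gaps.
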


\begin{lemma} \label{mirror_descent_inequality}
(Stochastic Mirror Descent).\ Conditioned on $x_k$, we have that for arbitrary $u \in \mathbb{R}^d$,
\begin{equation}
\alpha_k(\nabla f(x_k),z_{k-1}-u) \leq V_{z_{k-1}}(u) - \mathbb{E}_{I_k} [ V_{z_k}(u) ] + \frac{1}{2}\alpha_k^2\|\nabla f(x_k)\|^2 + \frac{1}{2}\alpha_k^2\mathbb{E}_{I_k}\| v_k - \nabla f(x_k) \|^2. \label{md_ineq}
\end{equation}
\end{lemma}

\begin{proof}[Proof of Lemma \ref{key_lemma}] 
We denote $V_{z_k}(x_*)$ by $V_k$ for simplicity. From Lemma \ref{svrg_bound_lemma}, \ref{gradient_descent_inequality}, and \ref{mirror_descent_inequality} with $u=x_*$, 
\begin{eqnarray*}
&&\!\!\!\!\!\!\!\!\!\!\!\!\alpha_{k+1}  (\nabla f(x_{k+1}), z_k - x_*) \\ 
&&\!\!\!\!\!\!\!\!\!\!\!\!\underset{ (\ref{gd_ineq}, \ref{md_ineq}) }{\leq} V_k - \mathbb{E}_{I_{k+1}}[ V_{k+1} ] +  L \alpha_{k+1}^2 ( f(x_{k+1}) - \mathbb{E}_{I_{k+1}}[ f(y_{k+1}) ]) + \alpha_{k+1}^2 \mathbb{E}_{I_{k+1}} \| v_{k+1} - \nabla f(x_{k+1}) \|^2 \\
&&\!\!\!\!\!\!\!\!\!\!\!\!\underset{ (\ref{svrg_bound}) }{\leq} V_k - \mathbb{E}_{I_{k+1}}[ V_{k+1} ] + L \alpha_{k+1}^2 ( f(x_{k+1}) - \mathbb{E}_{I_{k+1}}[ f(y_{k+1}) ]) \\ 
&&\!\!\!\!\!\!\!\!\!\!\!\!\ \ \ \ + 4L \alpha_{k+1}^2 \delta_{k+1}( f(x_{k+1}) - f(x_*) + f(y_0) - f(x_*) )  \\ 
&&\!\!\!\!\!\!\!\!\!\!\!\!= V_k - \mathbb{E}_{I_{k+1}}[ V_{k+1} ] + ( 1 + 4\delta_{k+1})L \alpha_{k+1}^2(  f(x_{k+1}) - f(x_*) ) - L \alpha_{k+1}^2  \mathbb{E}_{I_{k+1}}[ f(y_{k+1}) - f(x_*) ] \\ 
&&\!\!\!\!\!\!\!\!\!\!\!\!\ \ \ \ + 4L \alpha_{k+1}^2 \delta_{k+1}( f(y_0) - f(x_*) ) . 
\end{eqnarray*}
By taking the expectation with respect to the history of random variables $I_1,I_2\ldots$, we have,
\begin{eqnarray}
\alpha_{k+1} \mathbb{E}[(\nabla f(x_{k+1}), z_k - x_*)] &\leq& \mathbb{E}[ V_k - V_{k+1} ] + ( 1 + 4\delta_{k+1})L \alpha_{k+1}^2  \mathbb{E}[f(x_{k+1}) - f(x_*) ]   \nonumber \\
&&\hspace{-15mm} - L \alpha_{k+1}^2  \mathbb{E}[ f(y_{k+1}) - f(x_*) ] + 4L \alpha_{k+1}^2 \delta_{k+1}( f(y_0) - f(x_*) ), \label{key_lem_exp1}
\end{eqnarray}
and we get
\begin{eqnarray}
\sum_{k=0}^{m}\alpha_{k+1} \mathbb{E}[f(x_{k+1})-f(x_*)] &\leq& \sum_{k=0}^{m}\alpha_{k+1} \mathbb{E}[(\nabla f(x_{k+1}), x_{k+1}- x_*)] \nonumber \\
&&\hspace{-30mm} =\ \sum_{k=0}^{m}\alpha_{k+1} ( \mathbb{E}[(\nabla f(x_{k+1}), x_{k+1}- z_k)] +  \mathbb{E}[(\nabla f(x_{k+1}), z_k- x_*)] ) \nonumber \\
&&\hspace{-30mm} =\ \sum_{k=0}^{m}\alpha_{k+1} \left( \frac{1-\tau_k}{\tau_k}\mathbb{E}[(\nabla f(x_{k+1}), y_k - x_{k+1})] +  \mathbb{E}[(\nabla f(x_{k+1}), z_k- x_*)] \right) \nonumber \\
&&\hspace{-30mm} \leq\ \sum_{k=0}^{m} \left( \alpha_{k+1} \frac{1-\tau_k}{\tau_k}\mathbb{E}[ f(y_k) - f(x_{k+1}) ] +  \alpha_{k+1} \mathbb{E}[(\nabla f(x_{k+1}), z_k- x_*)] \right). \label{key_lem_exp2}
\end{eqnarray}
Using (\ref{key_lem_exp1}), (\ref{key_lem_exp2}), and $V_{z_{k+1}}(x_*) \geq 0$, we have
\begin{eqnarray*}
&&\hspace{-15mm}\sum_{k=0}^{m} \alpha_{k+1} \left( 1 + \frac{1-\tau_k}{\tau_k} - (1+4\delta_{k+1})L\alpha_{k+1} \right) \mathbb{E}[f(x_{k+1})-f(x_*)] \\ \nonumber
&&\leq V_0 + \sum_{k=0}^{m}\alpha_{k+1} \frac{1-\tau_k}{\tau_k}\mathbb{E}[ f(y_k) - f(x_*) ] -L \sum_{k=0}^m \alpha_{k+1}^2  \mathbb{E}[ f(y_{k+1})  - f(x_*) ] \\
&&\ \ \ \ + 4L \sum_{k=0}^m \alpha_{k+1}^2 \delta_{k+1}( f(y_0) - f(x_*) ).
\end{eqnarray*}
This completes the proof of Lemma \ref{key_lemma}.
\end{proof}


From now on we consider Algorithm 1 with option 1 and set 
\begin{equation}
\eta=\frac{1}{L},\ \  \alpha_{k+1} = \frac{1}{4L}(k+2),\ \  \frac{1}{\tau_k} = L \alpha_{k+1} + \frac{1}{2},\ \ for\ \  k=0,1,\ldots. \label{params}
\end{equation}

\begin{theorem} \label{theorem1}
Consider Algorithm 1 with option 1 under Assumption \ref{assumption_l_smooth}. For $p \in \left( 0,\frac{1}{2} \right]$, we choose  $b_{k+1}\in \mathbb{Z}_+$ such that $4L \delta_{k+1}\alpha_{k+1} \leq p$.
Then, we have
\begin{equation}
\mathbb{E}[ f(y_{m+1})  - f(x_*) ] \leq \frac {16L}{(m+2)^2}V_{z_0}(x_*) + \frac{5}{2}p( f(y_0) - f(x_*) ). \nonumber
\end{equation}
Moreover, if $m \geq 4 \sqrt{ \frac{LV_{z_0}(x_*)}{ q( f(y_0) - f(x_*) )} }\ $ for $q > 0$, then it follows
\begin{equation}
\mathbb{E}[ f(y_{m+1})  - f(x_*) ] \leq \left(q+\frac{5}{2}p \right)( f(y_0) - f(x_*) ). \nonumber 
\end{equation}
\end{theorem}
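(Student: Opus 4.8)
The plan is to specialize Lemma~\ref{key_lemma} to the parameter choice (\ref{params}) and then observe that, term by term, every quantity except the two we wish to retain carries a sign that lets us discard it. Concretely, I would keep only the $L\alpha_{m+1}^2\,\mathbb{E}[f(y_{m+1})-f(x_*)]$ term on the left and the final $(f(y_0)-f(x_*))$ term on the right, and argue that (i) the weighted $\mathbb{E}[f(x_{k+1})-f(x_*)]$ sum on the left has a nonnegative coefficient and so may be dropped, and (ii) the $\mathbb{E}[f(y_k)-f(x_*)]$ sum for $k\geq 1$ on the right has a nonpositive coefficient and so may also be dropped, enlarging the right-hand side. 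Since $f(x_{k+1})\geq f(x_*)$ and $f(y_k)\geq f(x_*)$ by optimality of $x_*$, both deletions are valid.

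The sign computations are where the specific parameter values do their work, and they use only the two algebraic identities $\tfrac{1}{\tau_k}=L\alpha_{k+1}+\tfrac12$ and $\tfrac{1-\tau_k}{\tau_k}=L\alpha_{k+1}-\tfrac12$. For the left-hand coefficient I would substitute the first identity to get $\alpha_{k+1}\bigl(\tfrac12-4\delta_{k+1}L\alpha_{k+1}\bigr)$, which is $\geq \alpha_{k+1}(\tfrac12-p)\geq 0$ by the hypothesis $4L\delta_{k+1}\alpha_{k+1}\leq p\leq \tfrac12$. For the $f(y_k)$ coefficient with $k\geq1$, the second identity turns $\alpha_{k+1}\tfrac{1-\tau_k}{\tau_k}-L\alpha_k^2$ into $L\alpha_{k+1}^2-\tfrac12\alpha_{k+1}-L\alpha_k^2$; the crucial point is that, with the linear choice $\alpha_{k+1}=\tfrac{1}{4L}(k+2)$, this telescopes to the constant $-\tfrac{1}{16L}<0$, so all these terms are nonpositive. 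Finally, the coefficient $\alpha_1\tfrac{1-\tau_0}{\tau_0}$ of $f(y_0)$ vanishes because $L\alpha_1=\tfrac12$ forces $\tau_0=1$. After these reductions the inequality collapses to $L\alpha_{m+1}^2\,\mathbb{E}[f(y_{m+1})-f(x_*)]\leq V_{z_0}(x_*)+4L\sum_{k=0}^m\alpha_{k+1}^2\delta_{k+1}\,(f(y_0)-f(x_*))$.

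From here the finish is a direct estimate. I would use $4L\delta_{k+1}\alpha_{k+1}\leq p$ to bound the residual sum by $p\sum_{k=0}^m\alpha_{k+1}=\tfrac{p}{8L}(m+1)(m+4)$, then divide through by $L\alpha_{m+1}^2=\tfrac{(m+2)^2}{16L}$. The $V_{z_0}$ term becomes $\tfrac{16L}{(m+2)^2}V_{z_0}(x_*)$, and the $f(y_0)$ term becomes $\tfrac{2p(m+1)(m+4)}{(m+2)^2}(f(y_0)-f(x_*))$; a one-line check that $\tfrac{2(m+1)(m+4)}{(m+2)^2}$ is maximized at $m=2$ with value $\tfrac94\leq\tfrac52$ yields the first displayed bound. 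For the second statement I would simply note that the hypothesis $m\geq 4\sqrt{LV_{z_0}(x_*)/\bigl(q(f(y_0)-f(x_*))\bigr)}$ gives $(m+2)^2\geq 16LV_{z_0}(x_*)/\bigl(q(f(y_0)-f(x_*))\bigr)$, so $\tfrac{16L}{(m+2)^2}V_{z_0}(x_*)\leq q(f(y_0)-f(x_*))$, and adding the $\tfrac52 p$ term completes the proof.

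The routine parts are the arithmetic-series evaluation and the scalar maximization; the one step I expect to be delicate is the telescoping verification that $L\alpha_{k+1}^2-\tfrac12\alpha_{k+1}-L\alpha_k^2$ is exactly constant and negative, since this is precisely the balance between the linear growth of $\alpha_{k+1}$ and the additive $\tfrac12$ built into the definition of $\tau_k$ that makes the $f(y_k)$ terms discardable. Getting that cancellation right, together with confirming the boundary term at $k=0$ vanishes, is the heart of why this particular parameter schedule is chosen.
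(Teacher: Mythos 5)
Your proposal is correct and follows essentially the same route as the paper: specialize Lemma~\ref{key_lemma} with the parameters in (\ref{params}), verify the sign conditions $\frac{1}{\tau_k}-(1+4\delta_{k+1})L\alpha_{k+1}\geq 0$ and $\alpha_{k+1}\frac{1-\tau_k}{\tau_k}-L\alpha_k^2=-\frac{1}{16L}<0$ together with $\tau_0=1$, reduce to $L\alpha_{m+1}^2\,\mathbb{E}[f(y_{m+1})-f(x_*)]\leq V_{z_0}(x_*)+4L\sum_{k=0}^m\alpha_{k+1}^2\delta_{k+1}(f(y_0)-f(x_*))$, and bound the residual sum via $4L\delta_{k+1}\alpha_{k+1}\leq p$. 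Your final constant is even slightly sharper ($\frac{9}{4}$ versus the paper's $\frac{5}{2}$, obtained there from the cruder bound $\sum_{k=0}^m\alpha_{k+1}\leq\frac{5}{32L}(m+2)^2$), but the argument is the same.
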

\begin{proof}
Using Lemma \ref{key_lemma} and
\begin{eqnarray*}
&&\tau_0 = 1,\ \ \frac{1}{\tau_k}-(1+4\delta_{k+1})L\alpha_{k+1} \geq 0,\\ 
&&\alpha_{k+1} \frac{1-\tau_k}{\tau_k} - L\alpha_k^2 = L\alpha_{k+1}^2 - \frac{1}{2}\alpha_{k+1} - L\alpha_k^2 = - \frac{1}{16L} < 0,
\end{eqnarray*}
we have
\begin{equation}
 L\alpha_{m+1}^2 \mathbb{E}[ f(y_{m+1})  - f(x_*) ] \leq V_{z_0}(x_*) + 4L \sum_{k=0}^m \alpha_{k+1}^2 \delta_{k+1}( f(y_0) - f(x_*) ). \nonumber
\end{equation}
This proves the theorem because $4L \sum_{k=0}^m \alpha_{k+1}^2 \delta_{k+1} \leq p \sum_{k=0}^m \alpha_{k+1} \leq \frac{5p}{32L}(m+2)^2 $.
\end{proof}

Let $b_{k+1}, m \in \mathbb{Z}_+$ be the minimum values satisfying the assumption of Theorem \ref{theorem1} for $p=q=\epsilon$, i.e.,
$b_{k+1}= \left \lceil \frac{n(k+2)}{\epsilon(n-1)+k+2} \right \rceil$ and $m = \left \lceil 4 \sqrt{ \frac{LV_{z_0}(x_*)}{ \epsilon( f(y_0) - f(x_*) )} } \right \rceil$. 
Then, from Theorem \ref{theorem1}, we have an upper bound on the overall complexity (total number of component gradient evaluations 
to obtain $\epsilon$-accurate solution in expectation):
\begin{equation}
O\left( n + \sum_{k=0}^m b_{k+1} \right) \leq O\left( n + m\frac{nm}{\epsilon n+m} \right) 
=O\left( n + \frac{nL}{\epsilon ^2n + \sqrt{\epsilon L}} \right), \nonumber
\end{equation}
where we used the monotonicity of $b_{k+1}$ with respect to $k$ for the first inequality. 
Note that the notation $O$ also hides $V_{z_0}(x_*)$ and $f(y_0)-f(x_*)$.

\subsection{Multi-Stage Scheme}
In this subsection, we introduce AMSVRG, as described in Figure \ref{multi_stage}.
\begin{figure}[h]
\begin{center}
\fbox{\rule{0cm}{0cm} 
\begin{tabular}{l}
{\bf Algorithm 2}$(w_0,\ (m_s)_{s\in \mathbb{Z}_+},\ \eta,\ (\alpha_{k+1})_{k\in \mathbb{Z}_+},\ (b_{k+1})_{k\in \mathbb{Z}_+},\ (\tau_k)_{k\in \mathbb{Z}_+} )$ 
\\ \hline \\
{\bf for} $s \leftarrow 0,\ 1,\ldots$ \\
\ \ \ \ \ \ $y_0 \leftarrow w_s,\ \ z_0 \leftarrow w_s$\\
\ \ \ \ \ \ $w_{s+1} \leftarrow {\bf Algorithm 1 }( y_0,\ z_0,\ m_s,\ \eta,\ (\alpha_{k+1})_{k\in \mathbb{Z}_+},\ (b_{k+1})_{k\in \mathbb{Z}_+},\ (\tau_k)_{k\in \mathbb{Z}_+})$\\
{\bf end} \\
\end{tabular}
\rule{0cm}{0cm}}
\end{center}
\caption{Accelerated efficient Mini-batch SVRG}
\label{multi_stage}
\end{figure}
We consider the convergence of AMSVRG under the following boundedness assumption which has been used in a 
several papers to analyze incremental and stochastic methods (e.g., \cite{BL2005,GOP2014}).
\begin{assumption} (Boundedness) \label{assumption_boundedness}
There is a compact subset $\Omega \subset \mathbb{R}^d$ such that the sequence $\{w_s\}$ generated by AMSVRG is contained in $\Omega$.
\end{assumption}
Note that, if we change the initialization of $z_0\leftarrow w_s$ to $z_0 \leftarrow z: constant$, the above method with this modification will achieve the same convergence 
for general convex problems without the boundedness assumption (c.f. supplementary materials).
However, for the strongly convex case, this modified version is slower than the above scheme.
Therefore, we consider the version described in Figure \ref{multi_stage}. 

From Theorem \ref{theorem1}, we can see that for small $p$ and $q$ (e.g. $p=1/10,\ q=1/4$), the expected value of the objective function is halved at every stage under the assumptions of Theorem \ref{theorem1}.
Hence, running AMSVRG for $O(\log(1/\epsilon))$ outer iterations achieves an $\epsilon$-accurate solution in expectation.
Here, we consider the complexity at stage $s$ to halve the expected objective value.
Let $b_{k+1}, m_s \in \mathbb{Z}_+$ be the minimum values satisfying the assumption of Theorem \ref{theorem1}, i.e.,
$b_{k+1}= \left \lceil \frac{n(k+2)}{p(n-1)+k+2} \right \rceil$ and $m_s = \left \lceil 4 \sqrt{ \frac{LV_{w_s}(x_*)}{ q( f(w_s) - f(x_*) )} } \right \rceil$. 
If the initial objective gap $f(w_s)-f(x_*)$ in stage $s$ is larger than $\epsilon$, then the complexity at stage is
\begin{eqnarray*}
&&\hspace{20mm} O\left( n + \sum_{k=0}^{m_s} b_{k+1} \right) \leq O\left( n + \frac{nm_s^2}{n+m_s} \right) \\
&&= O \left( n + \frac{nL}{n(f(w_s)-f(x_*)) + \sqrt{(f(w_s)-f(x_*)) L}} \right)
\leq O\left( n + \frac{nL}{\epsilon n + \sqrt{\epsilon L}} \right),
\end{eqnarray*}
where we used the monotonicity of $b_{k+1}$ with respect to $k$ for the first inequality. 
Note that by Assumption \ref{assumption_boundedness}, $\{V_{w_s}(x_*)\}_{s=1,2,\ldots}$ are uniformly bounded and notation $O$ also hides $V_{w_s}(x_*)$.
The above analysis implies the following theorem.

\begin{theorem} \label{theorem_non_sc}
Consider AMSVRG under Assumptions \ref{assumption_l_smooth} and \ref{assumption_boundedness}. 
We set $\eta, \alpha_{k+1},$ and $\tau_k$ as in (\ref{params}). 
Let $b_{k+1}= \left \lceil \frac{n(k+2)}{p(n-1)+k+2} \right \rceil$ and $m_s = \left \lceil 4 \sqrt{ \frac{LV_{w_s}(x_*)}{ q( f(w_s) - f(x_*) )} } \right \rceil$, 
where $p$ and $q$ are small values described above. 
Then, the overall complexity to run AMSVRG for $O(\log(1/\epsilon))$ outer iterations or to obtain an $\epsilon$-accurate solution is
\[ O\left( \left( n + \frac{nL}{\epsilon n + \sqrt{\epsilon L}} \right) \log \left(\frac{1}{\epsilon} \right) \right). \]
\end{theorem}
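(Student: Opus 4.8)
The plan is to combine the single-stage guarantee of Theorem~\ref{theorem1} with the per-stage complexity estimate derived immediately above the theorem statement, and then multiply by the number of outer iterations. First I would apply Theorem~\ref{theorem1} at each stage. Stage $s$ runs Algorithm~1 from $y_0 = z_0 = w_s$ for $m_s$ inner steps, where $m_s$ is the smallest integer with $m_s \geq 4\sqrt{LV_{w_s}(x_*)/(q(f(w_s)-f(x_*)))}$ and the mini-batch sizes $b_{k+1}$ satisfy the hypothesis of the theorem for the given $p$. Hence, conditioning on $w_s$,
\[
\mathbb{E}[f(w_{s+1}) - f(x_*) \mid w_s] \leq \left(q + \tfrac{5}{2}p\right)(f(w_s) - f(x_*)).
\]
For the prescribed small constants (e.g. $p = 1/10$, $q = 1/4$) we have $q + \frac{5}{2}p \leq \frac{1}{2}$, so each stage halves the expected gap.

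Second, I would iterate this inequality using the tower rule to obtain $\mathbb{E}[f(w_S) - f(x_*)] \leq 2^{-S}(f(w_0) - f(x_*))$; thus $S = O(\log(1/\epsilon))$ stages are enough to drive the expected objective gap below $\epsilon$, which fixes the number of outer iterations. Third, I would invoke the per-stage complexity count: the work at stage $s$ is one full-gradient evaluation ($n$ component gradients) plus $\sum_{k=0}^{m_s} b_{k+1}$ component gradients in the inner loop. Since $b_{k+1}$ is increasing in $k$, the inner sum is at most $(m_s+1)b_{m_s+1} = O(nm_s^2/(n+m_s))$; substituting the definition of $m_s$ turns this into $O\!\left(n + nL/(n(f(w_s)-f(x_*)) + \sqrt{(f(w_s)-f(x_*))L})\right)$. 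For every stage preceding the one at which the gap drops below $\epsilon$ we have $f(w_s)-f(x_*) > \epsilon$, so the denominator exceeds $\epsilon n + \sqrt{\epsilon L}$ and the per-stage cost is at most $O(n + nL/(\epsilon n + \sqrt{\epsilon L}))$. Multiplying this uniform per-stage bound by the $O(\log(1/\epsilon))$ stages yields the claimed overall complexity.

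The main obstacle is that $m_s$, and hence the per-stage cost, is random through its dependence on $V_{w_s}(x_*)$ and $f(w_s)-f(x_*)$, so one cannot naively multiply two deterministic numbers. This is exactly where Assumption~\ref{assumption_boundedness} enters: the compactness of $\Omega$ makes $\{V_{w_s}(x_*)\}$ uniformly bounded, so that the constant absorbed into $O(\cdot)$ from the numerator of $m_s$ is the same at every stage. Together with the lower bound $f(w_s)-f(x_*) > \epsilon$ that holds on every stage before termination, this upgrades the per-stage estimate to a single deterministic worst-case bound valid across all relevant stages, which may then be multiplied by the stage count to control the total expected complexity.
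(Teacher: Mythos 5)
Your proposal is correct and follows essentially the same route as the paper: apply Theorem \ref{theorem1} stage-by-stage to get the halving factor $q+\frac{5}{2}p\le\frac{1}{2}$, bound the per-stage cost by $O(n+nm_s^2/(n+m_s))$ via the monotonicity of $b_{k+1}$ in $k$, use $f(w_s)-f(x_*)>\epsilon$ and the boundedness of $\{V_{w_s}(x_*)\}$ from Assumption \ref{assumption_boundedness} to get a uniform per-stage bound, and multiply by the $O(\log(1/\epsilon))$ stages. If anything, you are more explicit than the paper about why the randomness of $m_s$ is harmless, which is a point the paper only addresses implicitly.
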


Next, we consider the strongly convex case. 
We assume that $f$ is a $\mu$-strongly convex function.
In this case, we choose the distance generating function $d(x)=\frac{1}{2}\|x\|^2$, so that the Bregman divergence becomes $V_x(y)=\frac{1}{2}\|x-y\|^2$.
Let the parameters be the same as in Theorem \ref{theorem_non_sc}.
Then, the expected value of the objective function is halved at every stage.
Because $m_s \leq 4\sqrt{\frac{\kappa}{q}}$, where $\kappa$ is the condition number $L/\mu$, the complexity at each stage is
\[ O\left( n + \sum_{k=0}^{m_s} b_{k+1} \right) \leq O\left( n + \frac{nm_s^2}{n+m_s} \right) \leq O\left( n + \frac{n\kappa}{n+\sqrt{\kappa}} \right). \]
Thus, we have the following theorem.

\begin{theorem} \label{theorem_sc}
Consider AMSVRG under Assumptions \ref{assumption_l_smooth} and \ref{assumption_strongly_convex}.
Let parameters $\eta, \alpha_{k+1}, \tau_k, m_s$, and $b_{k+1}$ be the same as those in Theorem \ref{theorem_non_sc}.
Then the overall complexity for obtaining $\epsilon$-accurate solution in expectation is
\[ O\left( \left( n + \frac{n\kappa}{n+\sqrt{\kappa}} \right) \log \left(\frac{1}{\epsilon} \right) \right). \]
\end{theorem}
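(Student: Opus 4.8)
The plan is to reduce the statement to the two facts that the surrounding discussion has already assembled: that each outer stage contracts the expected optimality gap by a fixed constant factor, and that each stage costs only $O(n + n\kappa/(n+\sqrt{\kappa}))$ component-gradient evaluations. Multiplying the per-stage cost by the number of stages then yields the claim. The whole argument rests on Theorem~\ref{theorem1}, applied once per outer iteration with the warm start $y_0 = z_0 = w_s$.

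First I would fix $p = 1/10$ and $q = 1/4$, so that the contraction factor in Theorem~\ref{theorem1} becomes $q + \tfrac{5}{2}p = \tfrac12$. Conditioning on $w_s$ and invoking Theorem~\ref{theorem1} gives $\mathbb{E}[f(w_{s+1}) - f(x_*) \mid w_s] \le \tfrac12 (f(w_s) - f(x_*))$, provided the two hypotheses of that theorem hold: the mini-batch condition $4L\delta_{k+1}\alpha_{k+1} \le p$, which is enforced by the prescribed choice of $b_{k+1}$, and the length condition $m_s \ge 4\sqrt{L V_{w_s}(x_*)/(q(f(w_s)-f(x_*)))}$. Taking total expectations and chaining the conditional contraction over $s$ via the tower property gives $\mathbb{E}[f(w_S)-f(x_*)] \le 2^{-S}(f(w_0)-f(x_*))$, so $S = O(\log(1/\epsilon))$ outer iterations suffice to reach an $\epsilon$-accurate solution in expectation.

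The one genuinely new ingredient, and the place where Assumption~\ref{assumption_strongly_convex} enters, is the verification that the length condition can be met with $m_s$ bounded \emph{independently of $\epsilon$ and of the stage}. With the Euclidean choice $d(x)=\tfrac12\|x\|^2$ we have $V_{w_s}(x_*)=\tfrac12\|w_s-x_*\|^2$, while $\mu$-strong convexity of $f$ (taking $y=x_*$ in Assumption~\ref{assumption_strongly_convex} and using $\nabla f(x_*)=0$) gives $f(w_s)-f(x_*) \ge \tfrac{\mu}{2}\|w_s-x_*\|^2$. Hence $V_{w_s}(x_*)/(f(w_s)-f(x_*)) \le 1/\mu$ deterministically, so the length condition is satisfied by the fixed choice $m_s = \lceil 4\sqrt{\kappa/q}\rceil = O(\sqrt{\kappa})$. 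This is exactly the point that separates the strongly convex analysis from Theorem~\ref{theorem_non_sc}: there the ratio $V_{w_s}(x_*)/(f(w_s)-f(x_*))$ was only controlled through boundedness and forced $m_s$ to grow like $\epsilon^{-1/2}$, whereas strong convexity caps it at $O(\sqrt{\kappa})$ uniformly.

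Finally I would bound the per-stage cost. Stage $s$ computes one full gradient, costing $O(n)$, and then performs $m_s$ inner iterations with mini-batch sizes $b_{k+1}$; by the monotonicity of $b_{k+1}$ in $k$ together with $m_s = O(\sqrt{\kappa})$ this sums to $O(n + nm_s^2/(n+m_s)) = O(n + n\kappa/(n+\sqrt\kappa))$, exactly as computed in the paragraph preceding the theorem. Multiplying this per-stage cost by the $O(\log(1/\epsilon))$ stages gives the stated overall complexity. I expect the only delicate point to be the bookkeeping of expectations: because Theorem~\ref{theorem1} is conditional on the random stage input $w_s$, one must argue that the fixed, $\epsilon$-free choice of $m_s$ above validates its hypotheses for every realization of $w_s$ before chaining the contractions in expectation, and the uniform strong-convexity bound is precisely what makes this legitimate.
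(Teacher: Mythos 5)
Your proposal is correct and follows essentially the same route as the paper: choose $d(x)=\tfrac12\|x\|^2$, use $\mu$-strong convexity at $x_*$ to bound $V_{w_s}(x_*)/(f(w_s)-f(x_*))\le 1/\mu$ so that $m_s\le 4\sqrt{\kappa/q}=O(\sqrt{\kappa})$ uniformly, invoke Theorem~\ref{theorem1} to halve the expected gap each stage, and multiply the resulting per-stage cost $O(n+n\kappa/(n+\sqrt{\kappa}))$ by $O(\log(1/\epsilon))$ stages. Your added remarks on the tower-property chaining and on why strong convexity removes the boundedness assumption are accurate elaborations of what the paper leaves implicit.
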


This complexity is the same as that of Acc-Prox-SVRG.
Note that for the strongly convex case, we do not need the boundedness assumption.

Table \ref{comparison_of_complexities} lists the overall complexities of the AGD, SAG, SVRG, SAGA, Acc-Prox-SVRG, and AMSVRG.
The notation $\tilde{O}$ hides constant and logarithmic terms.
By simple calculations, we see that
\[ \frac{n\kappa}{n+\sqrt{\kappa}}=\frac{1}{2}H(\kappa, n\sqrt{\kappa}\ ),\ \ \ 
\frac{nL}{\epsilon n + \sqrt{\epsilon L}}=\frac{1}{2}H\left( \frac{L}{\epsilon}, n\sqrt{\frac{L}{\epsilon}}\ \right), \] 
where $H(\cdot,\cdot)$ is the harmonic mean whose order is the same as $\min\{\cdot,\cdot\}$.
Thus, as shown in Table \ref{comparison_of_complexities}, 
the complexity of AMSVRG is less than or equal to that of other methods in any situation.
In particular, for non-strongly convex problems, our method potentially outperform the others.


\begin{table}[t]
\caption{Comparison of overall complexity. }
\label{complexity-table}
\begin{center}
\begingroup
\renewcommand{\arraystretch}{2}
\begin{tabular}{c|c|c} \hline  
Convexity &Algorithm &Complexity \\
\hline
\multirow{4}{*}{General convex} &AGD                  &$\tilde{O}\left( n\sqrt{ \frac{L}{\epsilon} } \right)$ \\
                                &SAG, SAGA            &$\tilde{O} \left( \frac{n+L}{\epsilon} \right) $  \\
                                &SVRG, Acc-SVRG       &---          \\
                                &{\bf AMSVRG}    &$\tilde{O}\left( n+ \min \left\{ \frac{L}{\epsilon}, n\sqrt{ \frac{L}{\epsilon}}\ \right\} \right)$ \\
\hline 
\multirow{4}{*}{Strongly convex} &AGD                         &$\tilde{O}\left(n\sqrt{\kappa} \right)$   \\
                                 &SAG                         &$\tilde{O} \left( \max \{ n, \kappa \} \right) $  \\
                                 &SVRG                        &$\tilde{O}\left( n + \kappa \right)$ \\
                                 &Acc-SVRG, {\bf AMSVRG} &$\tilde{O}\left( n + \min \left\{ \kappa,\ n\sqrt{\kappa}\ \right\} \right)$ \\
\hline 
\end{tabular}
\endgroup
\end{center}
\label{comparison_of_complexities}
\end{table}

\section{Restart Scheme}
The parameters of AMSVRG are essentially $\eta, m_s,$ and $b_{k+1}\ (i.e.,\ p)$ because the appropriate values of both $\alpha_{k+1}$ and $\tau_k$ can be expressed by $\eta=1/L$ as in (\ref{params}).
It may be difficult to choose an appropriate $m_s$ which is the restart time for Algorithm 1.
So, we propose heuristics for determining the restart time.

First, we suppose that the number of components $n$ is sufficiently large such that the complexity of our method becomes $O(n)$.
That is, for appropriate $m_s$, $O(n)$ is an upper bound on $\sum_{k=0}^{m_s} b_{k+1}$ (which is the complexity term).
Therefore, we estimate the restart time as the minimum index $m \in \mathbb{Z}_+$ that satisfies $\sum_{k=0}^{m} b_{k+1} \geq n$.
This estimated value is upper bound on $m_s$ (in terms of the order).
In this paper, we call this restart method {\it R1}.

Second, we propose an adaptive restart method using SVRG.
In a strongly convex case, we can easily see that if we restart the AGD for general convex problems every $\sqrt{\kappa}$, then 
the method achieves a linear convergence similar to that for strongly convex problems.
The drawback of this restart method is that the restarting time depends on an unknown parameter $\kappa$, so
several papers \cite{DC2013,GB2014,SBC2014} have proposed effective adaptive restart methods.
Moreover, \cite{GB2014} showed that this technique also performs well for general convex problems.
Inspired by their study, we propose an SVRG-based adaptive restart method called {\it R2}. That is, if
\[ ( v_{k+1}, y_{k+1} -y_k ) > 0, \]
then we return $y_k$ and start the next stage.

Third, we propose the restart method {\it R3}, which is a combination of the above two ideas.
When $\sum_{k=0}^{m} b_{k+1}$ exceeds $10n$, we restart Algorithm 1, and when
\[ ( v_{k+1}, y_{k+1} -y_k ) > 0\ \  \wedge\ \   \sum_{k=0}^{m} b_{k+1} > n,  \]
 we return $y_k$ and restart Algorithm 1.

\section{Numerical Experiments}

In this section, we compare AMSVRG with SVRG and SAGA.
We ran an $L2$-regularized multi-class logistic regularization on {\it mnist} and {\it covtype}
and ran an $L2$-regularized binary-class logistic regularization on {\it rcv1}.
The datasets and their descriptions can be found at the LIBSVM website\footnote{http://www.csie.ntu.edu.tw/~cjlin/libsvmtools/datasets/}.
In these experiments, we vary regularization parameter $\lambda$ in $\{ 0,\ 10^{-7},\ 10^{-6},\ 10^{-5}\}$.
We ran AMSVRG using some values of $\eta$ from $[10^{-2},\ 5\times10]$  and $p$ from $[10^{-1},\ 10]$, and then we chose the best $\eta$ and $p$.

The results are shown in Figure \ref{experiments}.
The horizontal axis is the number of single-component gradient evaluations.
Our methods performed well and outperformed the other methods in some cases.
For mnist and covtype, AMSVRG R1 and R3 converged quickly, and for rcv1, AMSVRG R2 worked very well.
This tendency was more remarkable when the regularization parameter $\lambda$ was small.

Note that the gradient evaluations for the mini-batch can be parallelized \cite{DGSX2012,AD2011,SZ2013b}, so AMSVRG may be further accelerated in a parallel framework such as GPU computing.
\begin{figure}
\begin{center}
\begin{tabular}{c|ccc}
$\lambda$ & \hspace{-2mm} mnist & \hspace{-7mm} covtype & \hspace{-7mm} rcv1 \\ 
&
\hspace{-2mm}  \multirow{8}{*}{ \resizebox{41mm}{!}{ \includegraphics[angle=0]{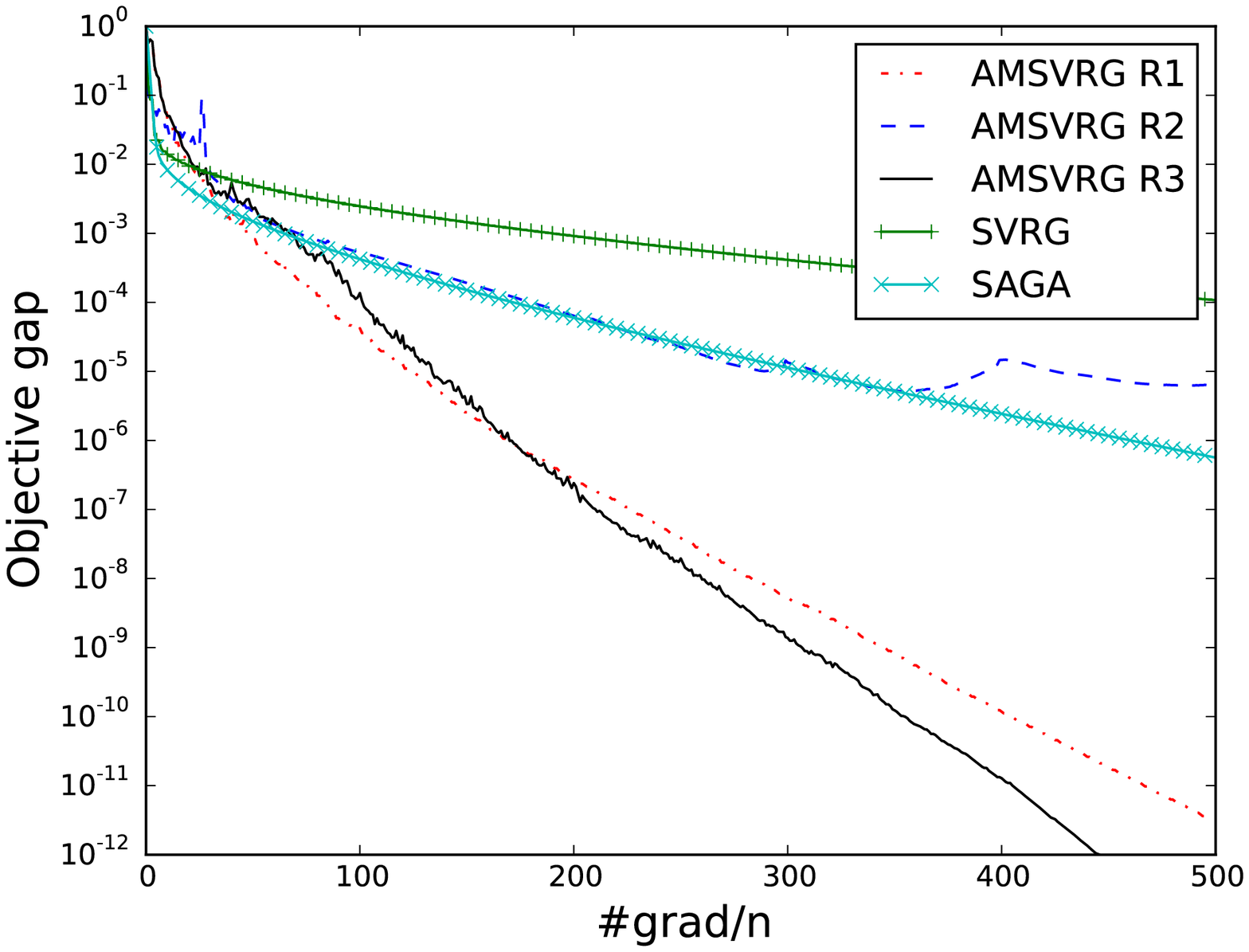} } }&
\hspace{-7mm}  \multirow{8}{*}{ \resizebox{41mm}{!}{ \includegraphics[angle=0]{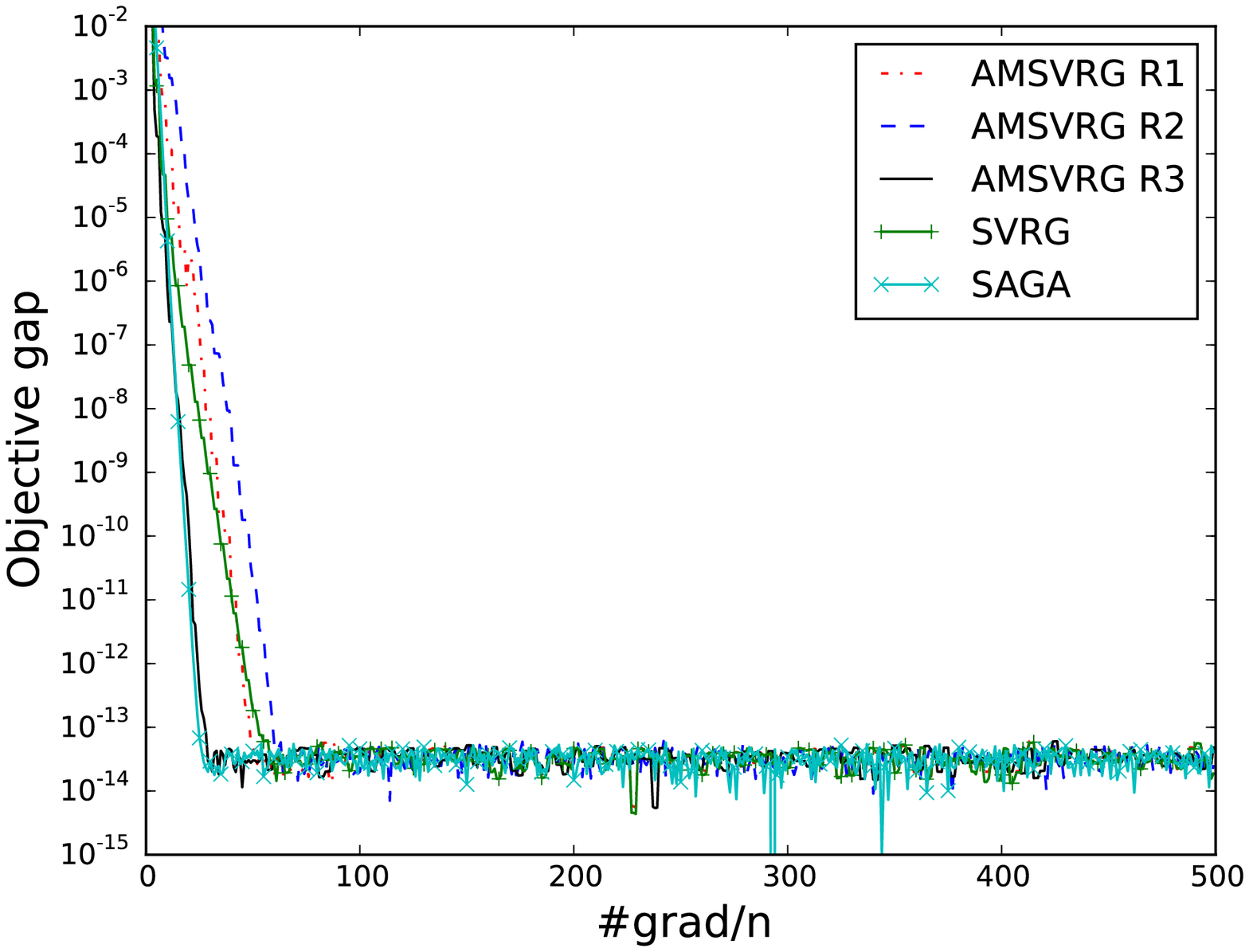} } }&
\hspace{-7mm} \multirow{8}{*}{ \resizebox{41mm}{!}{ \includegraphics[angle=0]{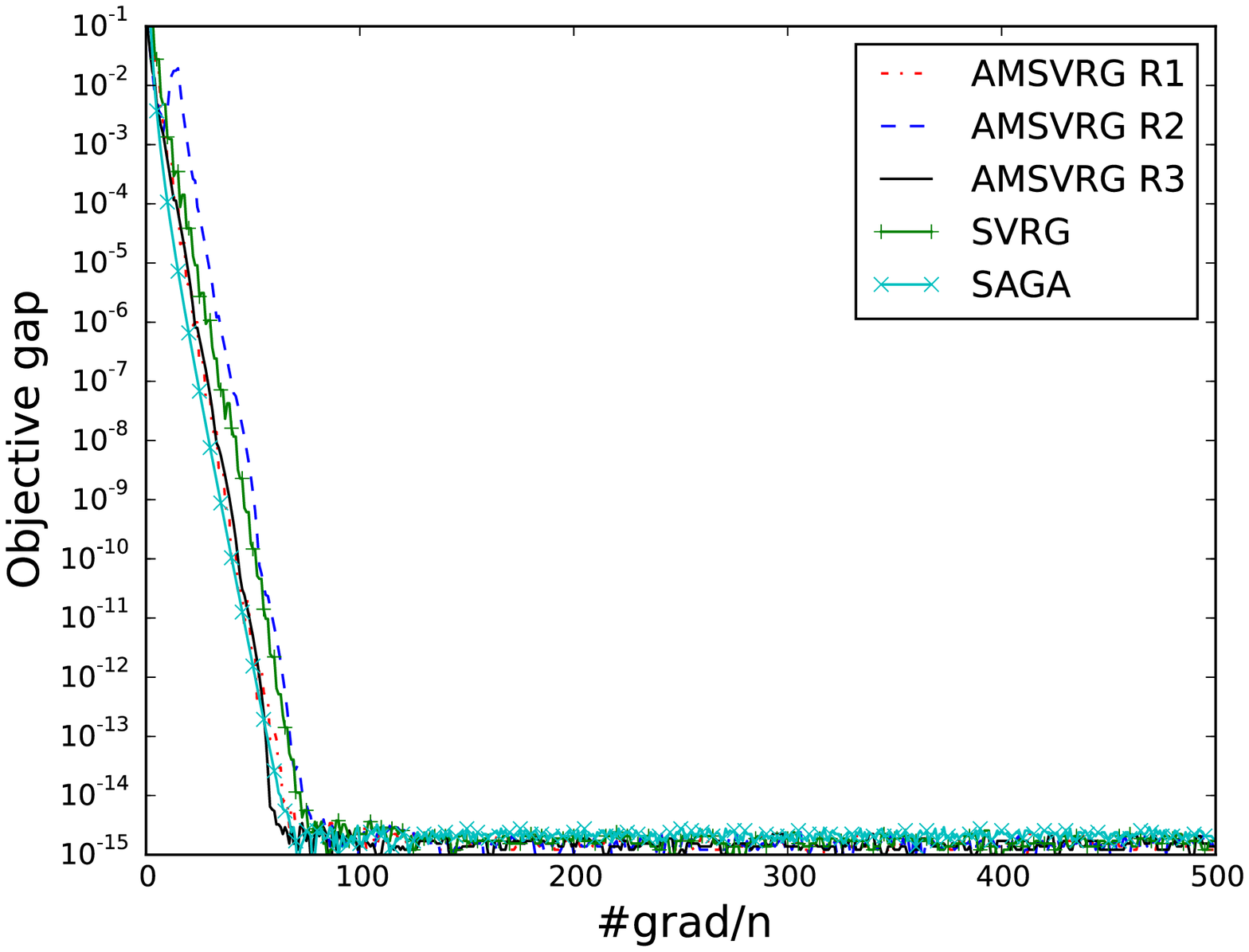} } }\\
&&&\\
&&&\\
$10^{-5}$ &&&\\
&&&\\
&&&\\
&&&\\
&&&\\
&
\hspace{-2mm}  \multirow{8}{*}{ \resizebox{41mm}{!}{ \includegraphics[angle=0]{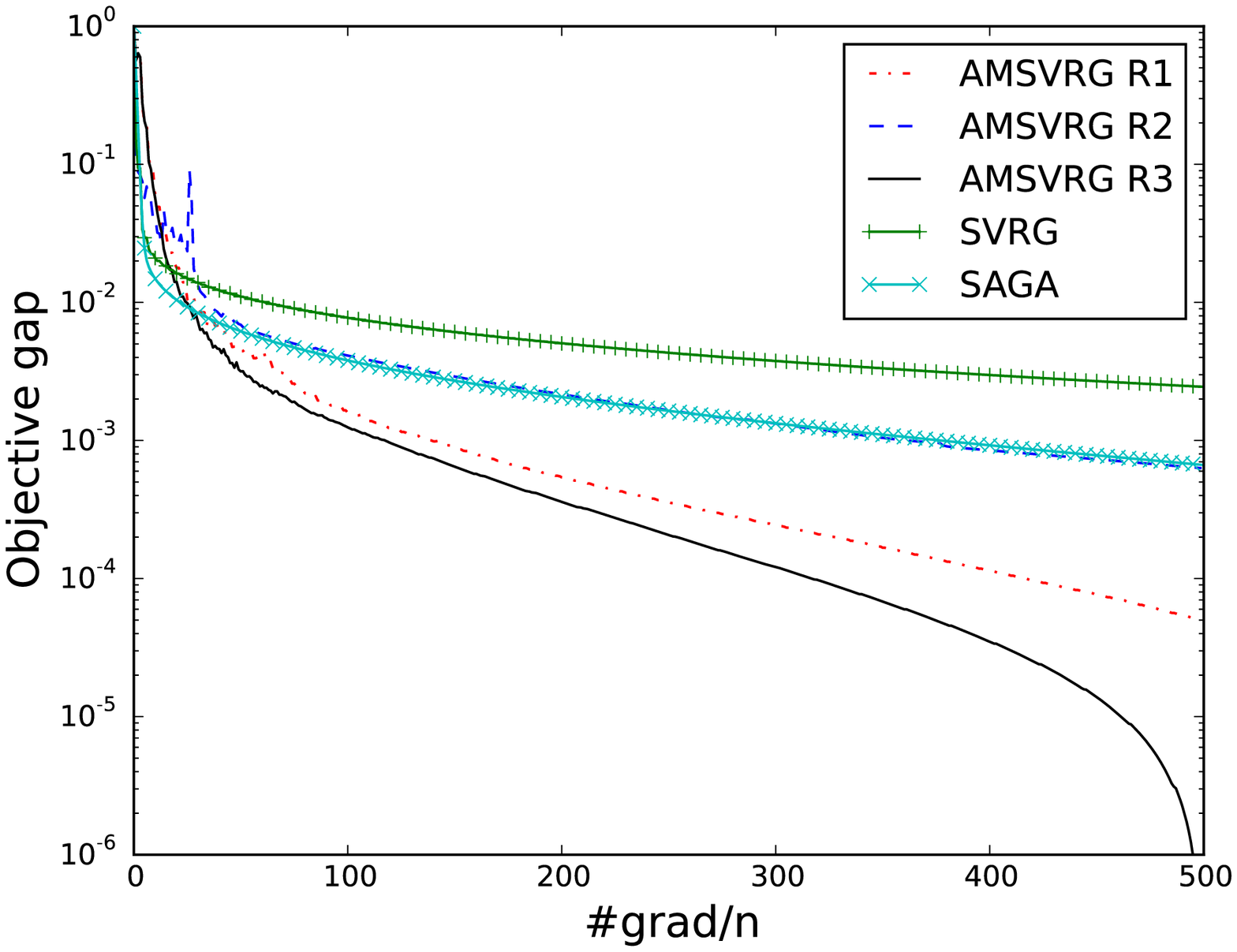} } }&
\hspace{-7mm}  \multirow{8}{*}{ \resizebox{41mm}{!}{ \includegraphics[angle=0]{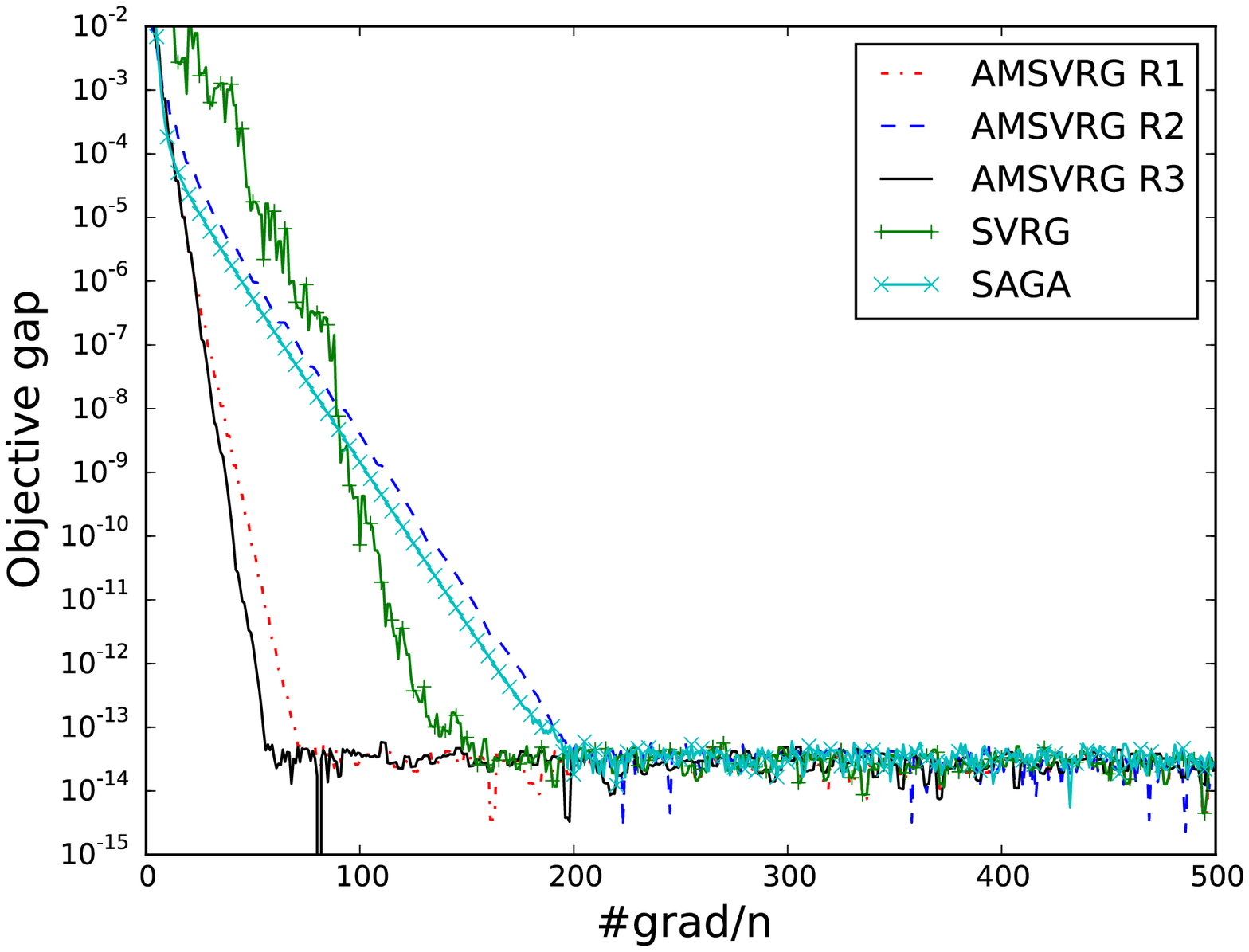} } }&
\hspace{-7mm} \multirow{8}{*}{ \resizebox{41mm}{!}{ \includegraphics[angle=0]{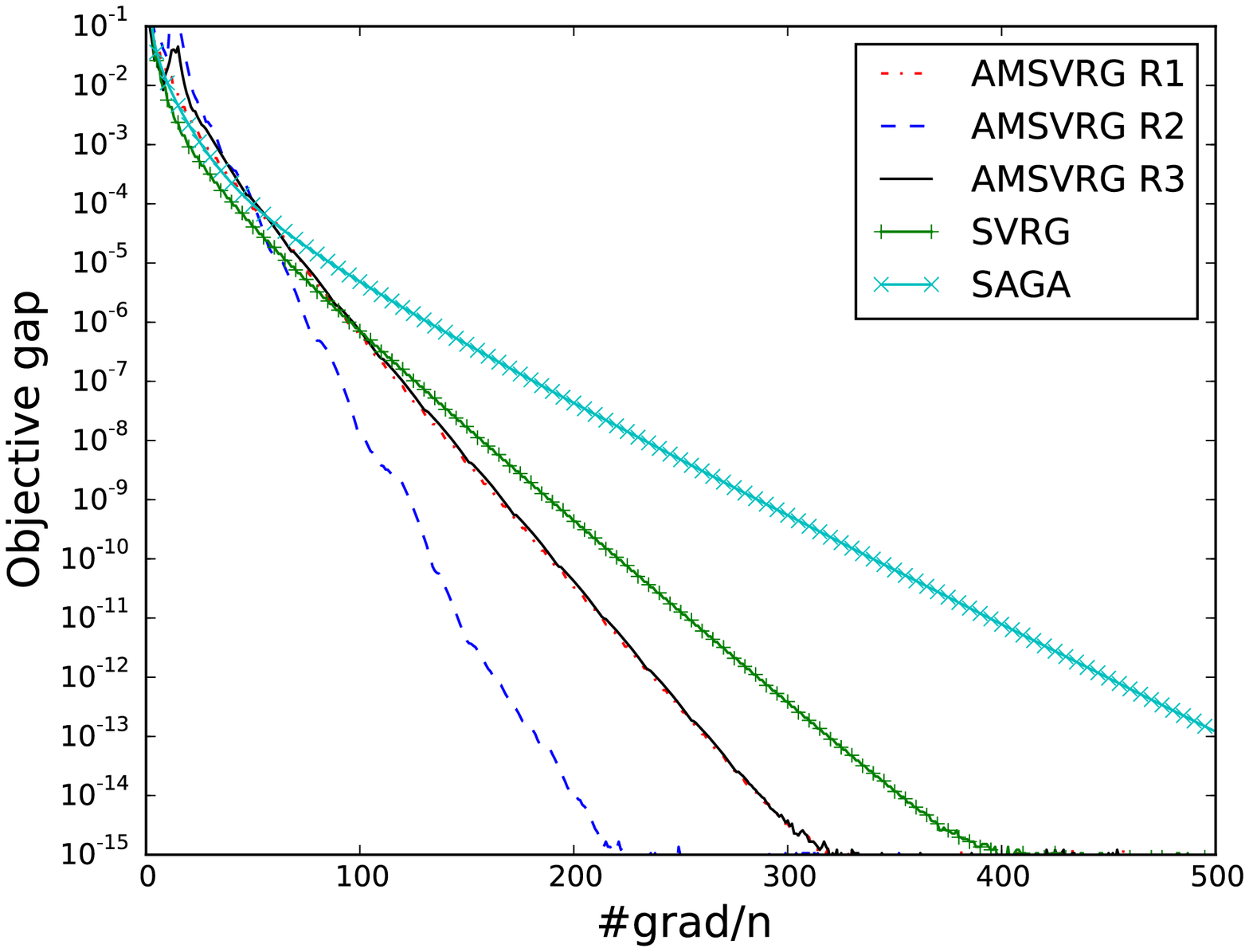} } }\\
&&&\\
&&&\\
$10^{-6}$ &&&\\
&&&\\
&&&\\
&&&\\
&&&\\
&
\hspace{-2mm}  \multirow{8}{*}{ \resizebox{41mm}{!}{ \includegraphics[angle=0]{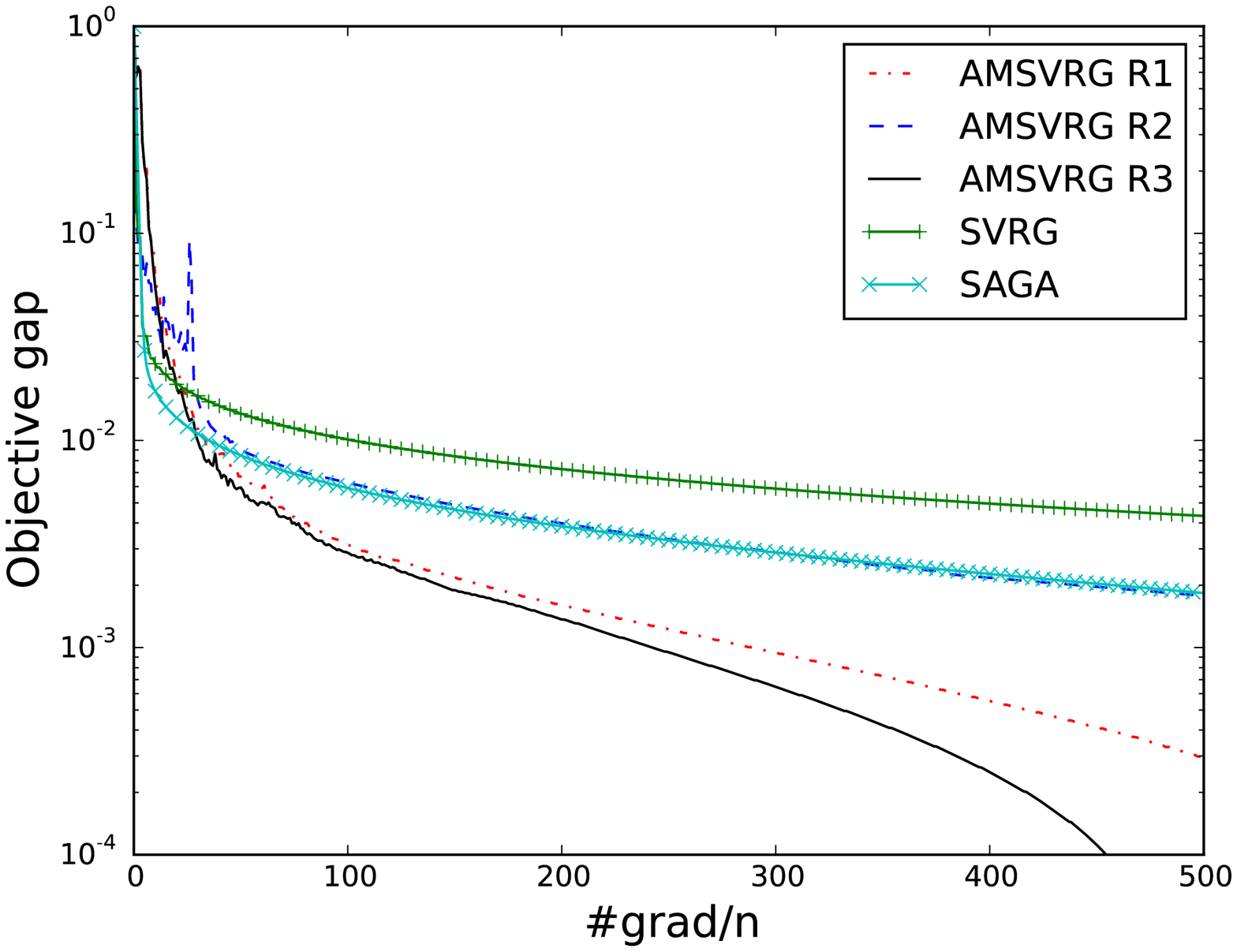} } }&
\hspace{-7mm}  \multirow{8}{*}{ \resizebox{41mm}{!}{ \includegraphics[angle=0]{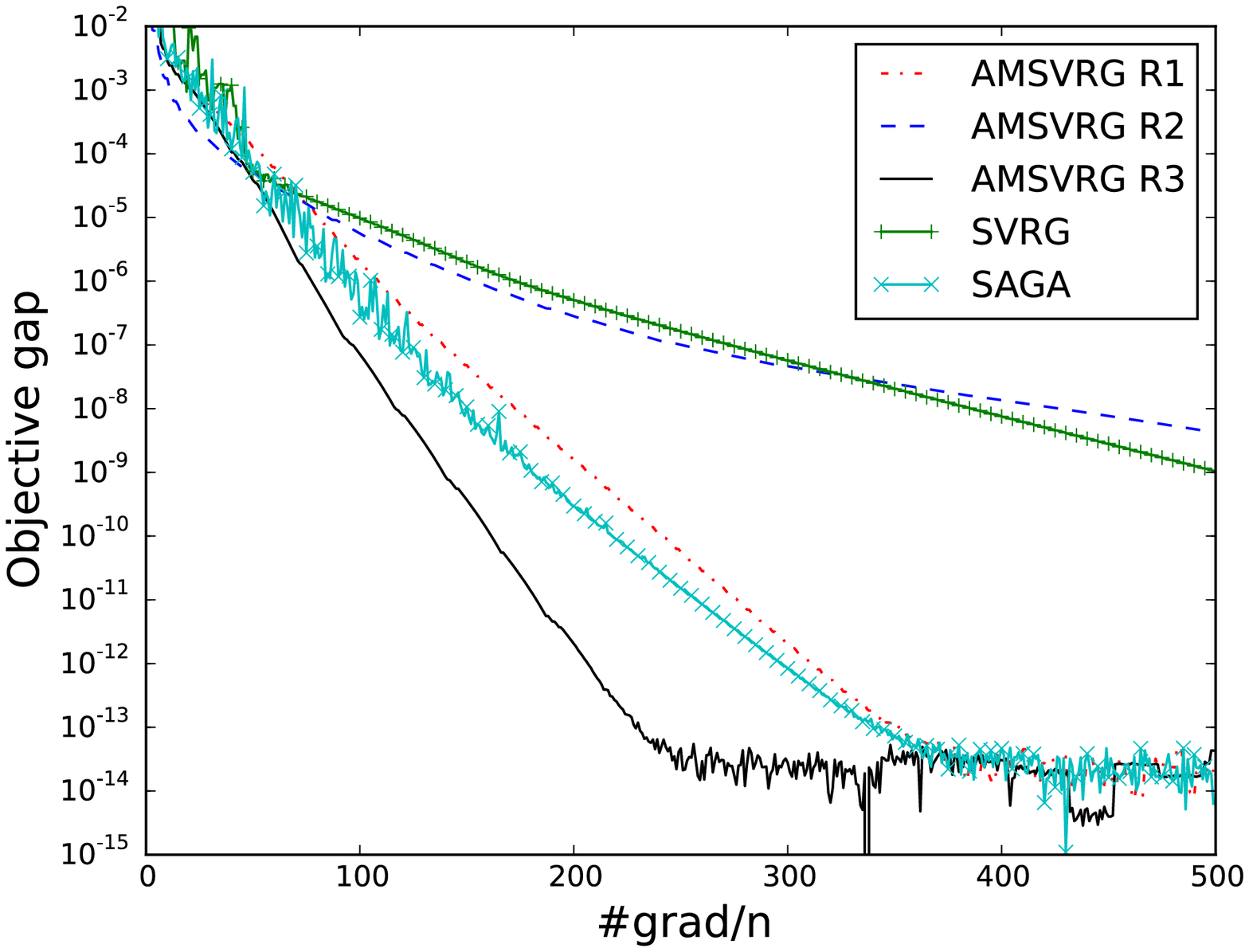} } }&
\hspace{-7mm} \multirow{8}{*}{ \resizebox{41mm}{!}{ \includegraphics[angle=0]{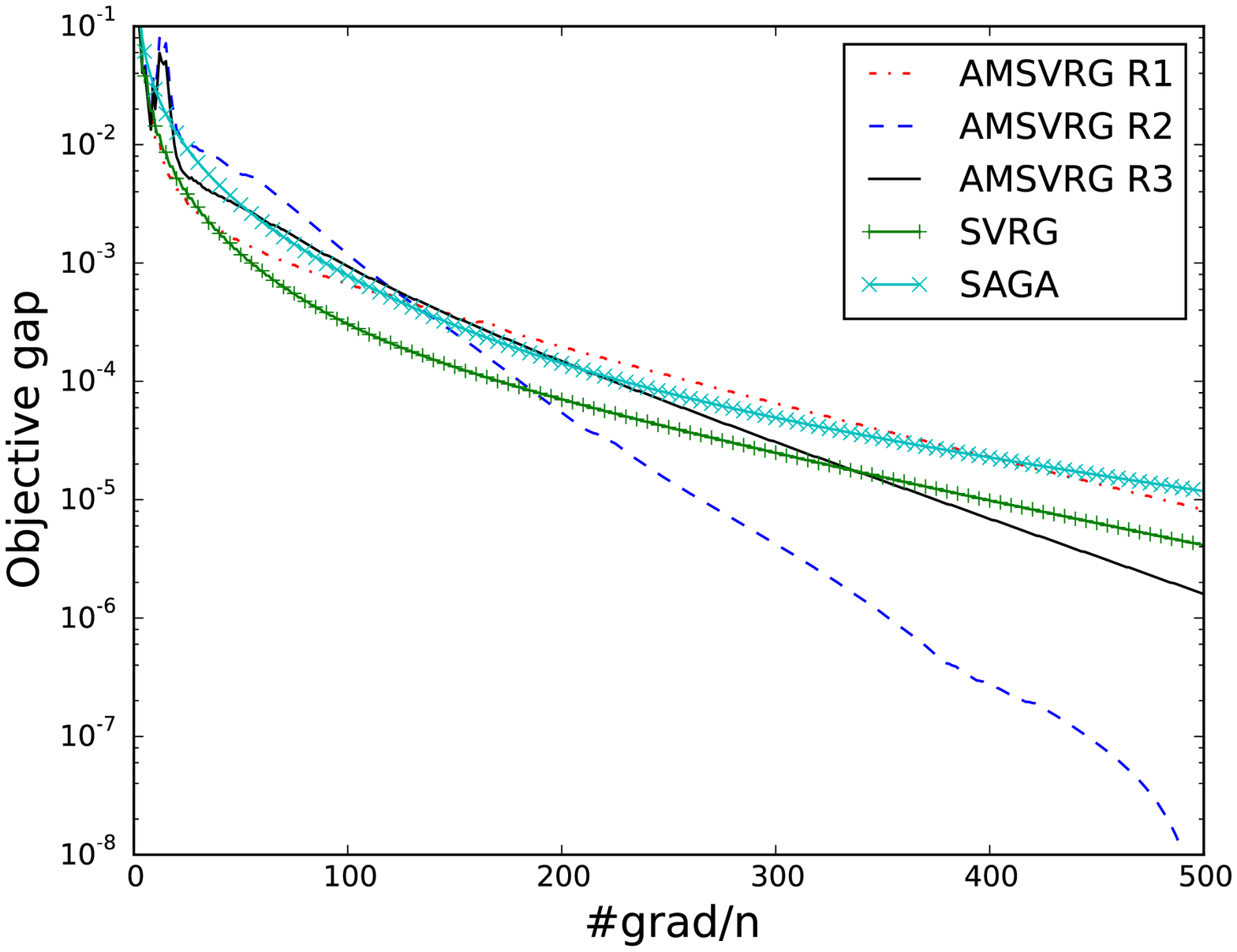} } }\\
&&&\\
&&&\\
$10^{-7}$ &&&\\
&&&\\
&&&\\
&&&\\
&&&\\
&
\hspace{-2mm}  \multirow{8}{*}{ \resizebox{41mm}{!}{ \includegraphics[angle=0]{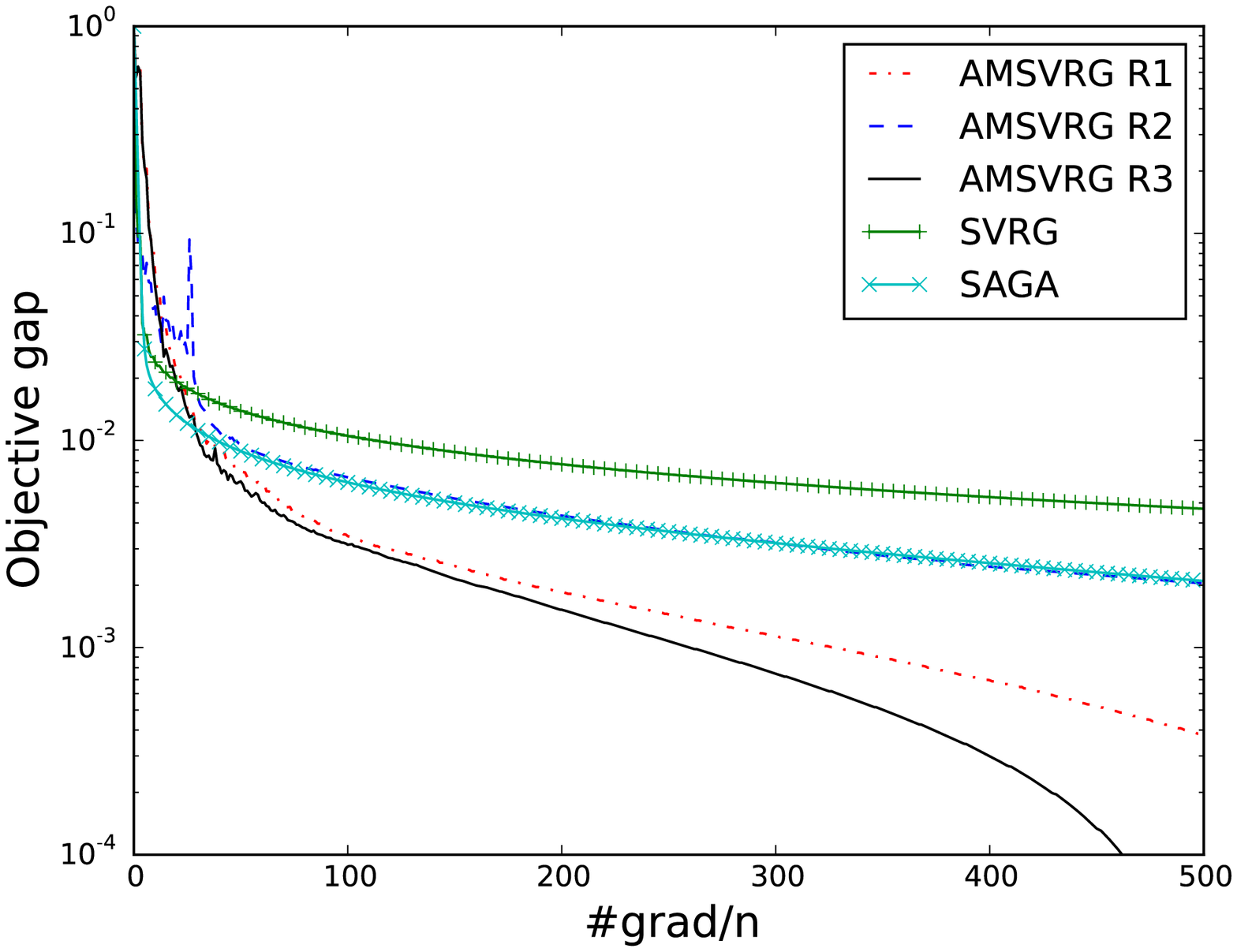} } }&
\hspace{-7mm}  \multirow{8}{*}{ \resizebox{41mm}{!}{ \includegraphics[angle=0]{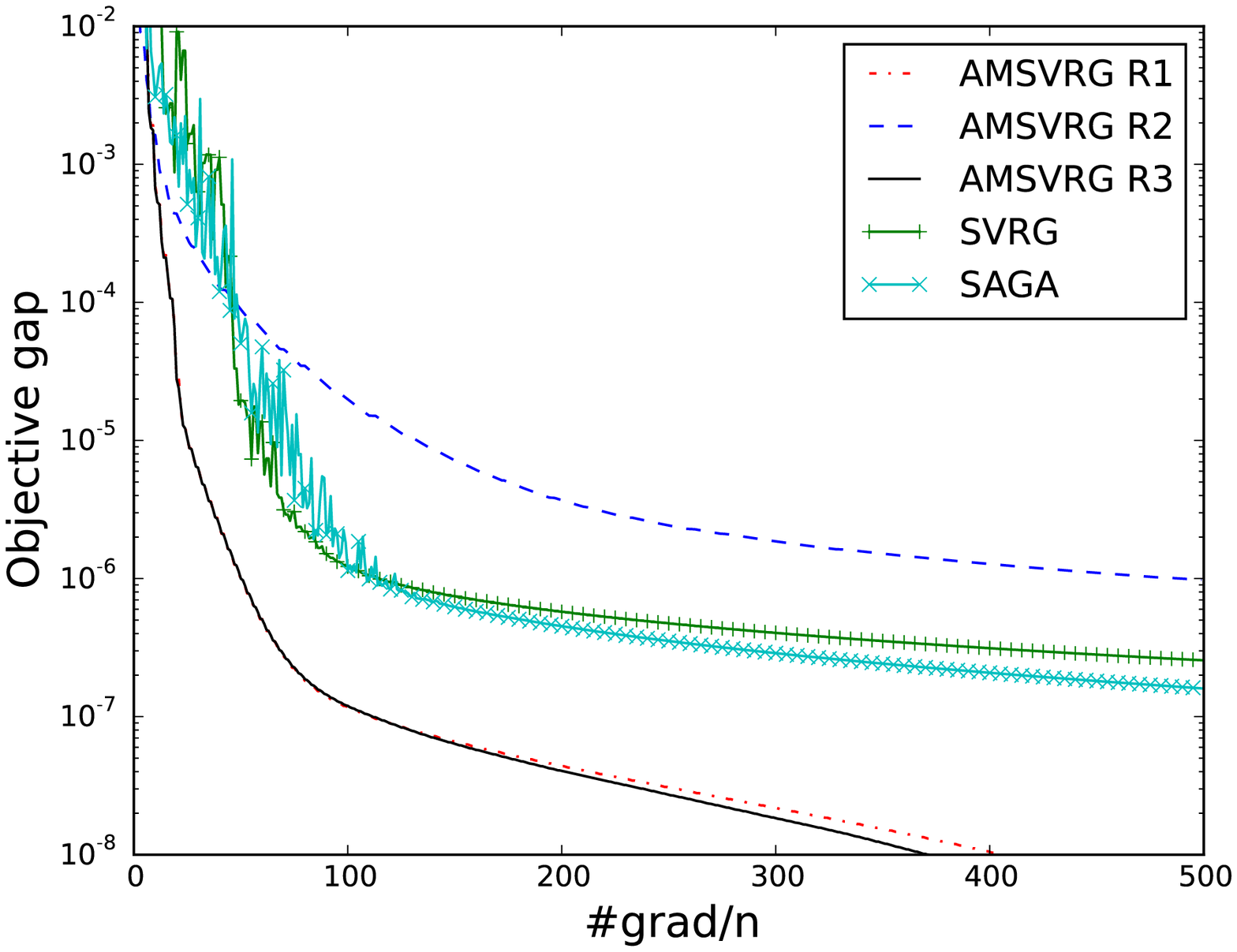} } }&
\hspace{-7mm} \multirow{8}{*}{ \resizebox{41mm}{!}{ \includegraphics[angle=0]{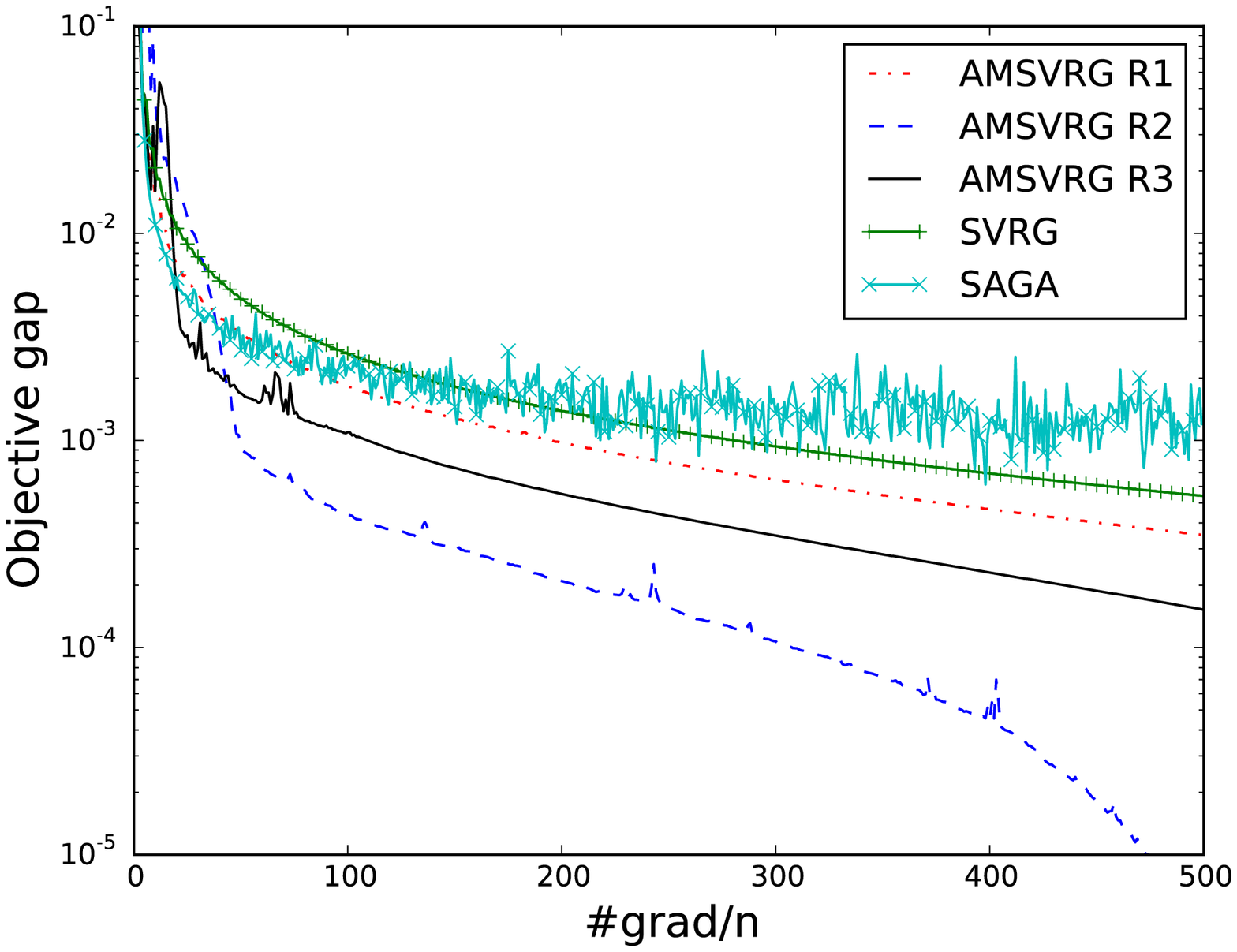} } }\\
&&&\\
&&&\\
$0$ &&&\\
&&&\\
&&&\\
&&&\\
&&&\\
\end{tabular}
\caption{ Comparison of algorithms applied to $L2$-regularized multi-class logistic regularization (left: mnist, middle: covtype), and $L2$-regularized binary-class logistic regularization (right: rcv1). }
\label{experiments}
\end{center}
\end{figure}

\section{Conclusion}
We propose method that incorporates acceleration gradient method and the SVRG in the increasing mini-batch setting.
We showed that our method achieves a fast convergence complexity for non-strongly and strongly convex problems.

\newpage
\begin{center}
\begin{LARGE}
{\bf Supplementary Materials}
\end{LARGE}
\end{center}
\setcounter{section}{0}
\renewcommand{\thesection}{\Alph{section}}
\section{Proof of the Lemma \ref{svrg_bound_lemma} }
To prove Lemma \ref{svrg_bound_lemma}, the following lemma is required, which is also shown in \cite{Fre1962}.
\begin{lemma_appendix}
Let $\{\xi_i\}_{i=1}^n$ be a set of vectors in $\mathbb{R}^d$ and $\mu$ denote an average of $\{\xi_i\}_{i=1}^n$.
Let $I$ denote a uniform random variable representing a size $b$ subset of $\{1,2,\ldots,n\}$.
Then, it follows that,
\[ \mathbb{E}_I \left \| \frac{1}{b}\sum_{i \in I}\xi_i - \mu \right \|^2 = \frac{n-b}{b(n-1)}\mathbb{E}_i \| \xi_i - \mu \|^2. \]
\end{lemma_appendix}
\begin{proof}
We denote a size $b$ subset of $\{1,2,\ldots,n\}$ by $S=\{ i_1,\ldots, i_b \}$ and denote $\xi_i-\mu$ by $\tilde{\xi}_i$. Then,
\begin{eqnarray*}
\mathbb{E}_I \left \| \frac{1}{b}\sum_{i \in I}\xi_i - \mu \right \|^2 
&=& \frac{1}{C(n,b)} \sum_{S} \left \| \frac{1}{b}\sum_{j=1}^b \xi_{i_j} - \mu \right \|^2  \\
&=& \frac{1}{b^2C(n,b)} \sum_S \left \| \sum_{j=1}^b \tilde{\xi}_{i_j} \right \|^2 \\
&=& \frac{1}{b^2C(n,b)} \sum_S \left( \sum_{j=1}^b \| \tilde{\xi}_{i_j}\|^2  + 2 \sum_{j,k,j< k} \tilde{\xi}_{i_j}^T\tilde{\xi}_{i_k} \right),
\end{eqnarray*}
where $C(\cdot,\cdot)$ is a combination. 
By symmetry, an each $\tilde{\xi}_i$ appears $\frac{ b C(n,b) }{ n }$ times 
and an each pair $\tilde{\xi}_i^T \tilde{\xi}_j$ for $i<j $ appears $\frac{ C(b,2)C(n,b) }{ C(n,2) } $ times in $\sum_S$.
Therefore, we have
\begin{eqnarray*}
\mathbb{E}_I \left \| \frac{1}{b}\sum_{i \in I}\tilde{\xi}_i - \mu \right \|^2 &=&
\frac{1}{b^2C(n,b)} \left( \frac{ b C(n,b) }{ n } \sum_{i=1}^n\| \tilde{\xi}_i\|^2 + \frac{ 2C(b,2)C(n,b) }{ C(n,2) } \sum_{i,j,i< j} \tilde{\xi}_i^T \tilde{\xi}_j  \right) \\
&=&\frac{1}{bn} \sum_{i=1}^n\| \tilde{\xi}_i\|^2 + \frac{ 2(b-1)}{ bn(n-1) } \sum_{i,j,i< j} \tilde{\xi}_i^T \tilde{\xi}_j.
\end{eqnarray*}
Since, $0=\| \sum_{i=1}^n \tilde{\xi}_i \|^2 = \sum_{i=1}^n \|\tilde{\xi}_i\|^2 + 2 \sum_{i,j,i< j} \tilde{\xi}_i^T \tilde{\xi}_j$, we have
\begin{eqnarray*}
\mathbb{E}_I \left \| \frac{1}{b}\sum_{i \in I}\tilde{\xi}_i - \mu \right \|^2 
= \left( \frac{1}{bn} - \frac{b-1}{bn(n-1)} \right) \sum_{i=1}^n\| \tilde{\xi}_i\|^2
= \frac{n-b}{b(n-1)} \frac{1}{n}\sum_{i=1}^n\| \tilde{\xi}_i\|^2.
\end{eqnarray*}
This finishes the proof of Lemma.
\end{proof}

We now prove the Lemma \ref{svrg_bound_lemma}.
\begin{proof}[ Proof of Lemma \ref{svrg_bound_lemma} ]
We set  $v_j^1 = \nabla f_j(x_k) - \nabla f_j(\tilde{x}) + \tilde{v}$.
Using Lemma A and 
\[ v_k = \frac{1}{b}\sum_{j\in I_k}v_j^1, \]
conditional variance of $v_k$ is as follows
\[ \mathbb{E}_{I_k}\| v_k - \nabla f(x_k) \|^2=\frac{1}{b}\frac{n-b}{n-1}\mathbb{E}_{j}\| v_j^1 - \nabla f(x_k) \|^2, \]
where expectation in right hand side is taken with respect to $j \in \{1,\ldots,n\}$.
By Corollary 3 in \cite{XZ2014_}, it follows that,
\begin{equation}
 \mathbb{E}_{j}\| v_j^1 - \nabla f(x_k) \|^2 \leq  4L(f(x_k)-f(x_*)+f(\tilde{x})-f(x_*) ).  \label{lem_clame} \nonumber
\end{equation}
This completes the proof of Lemma \ref{svrg_bound_lemma}.
\end{proof}

\section{Stochastic gradient descent analysis }
Below is the proof of Lemma \ref{gradient_descent_inequality}.
\begin{proof}[ Proof of Lemma \ref{gradient_descent_inequality} ]
It is clear that $y_k$ is equal to $x_k - \eta v_k$.
Since $f(x)$ is $L$-smooth and $\eta=\frac{1}{L}$, we have,
\begin{eqnarray*}
f(y_k)&\leq& f(x_k) + ( \nabla f(x_k), y_k - x_k) + \frac{L}{2} \| y_k-x_k\|^2 \\
&=& f(x_k) - \frac{1}{L}(\nabla f(x_k), v_k) + \frac{1}{2L}\| v_k \|^2.
\end{eqnarray*}
$v_k$ is an unbiased estimator of gradient $\nabla f(x_k)$, that is, $\mathbb{E}_{I_k}[v_k]=\nabla f(x_k)$.
Hence, we have
\[ \mathbb{E}_{I_k} \| v_k \|^2 = \|\nabla f(x_k) \|^2 + \mathbb{E}_{I_k} \|v_k - \nabla f(x_k)\|^2. \]
Using above two expressions, we get
\begin{eqnarray*}
\mathbb{E}_{I_k}[ f(y_k) ] &=& f(x_k) - \frac{1}{L} \| \nabla f(x_k) \|^2 + \frac{1}{2L} \mathbb{E}_{I_k} \| v_k \|^2 \\
&=& f(x_k) - \frac{1}{2L} \| \nabla f(x_k) \|^2 + \frac{1}{2L} \mathbb{E}_{I_k} \|v_k - \nabla f(x_k)\|^2.
\end{eqnarray*}
\end{proof}

\section{Stochastic mirror descent analysis}
We give the proof of Lemma \ref{mirror_descent_inequality}.
\begin{proof}[ Proof of Lemma \ref{mirror_descent_inequality} ]
The following are basic properties of Bregman divergence.
\begin{eqnarray}
& &(\nabla V_x(y), u - y) = V_x(u) - V_y(u) - V_x(y), \label{div_ineq_1} \\
& &V_x(y) \geq \frac{1}{2} \|x-y\|^2. \label{div_ineq_2}  
\end{eqnarray}
Using (\ref{div_ineq_1}) and (\ref{div_ineq_2}), we have
\begin{eqnarray*}
\alpha_k(v_k,z_{k-1}-u) &=& \alpha_k(v_k,z_{k-1}-z_k) + \alpha_k(v_k,z_k-u) \\
&=& \alpha_k(v_k,z_{k-1}-z_k) - ( \nabla V_{z_{k-1}}(z_k), z_k - u ) \\
&\underset{(\ref{div_ineq_1})}{=}& \alpha_k(v_k,z_{k-1}-z_k) + V_{z_{k-1}}(u) - V_{z_k}(u) - V_{z_{k-1}}(z_k) \\
&\underset{(\ref{div_ineq_2})}{\leq}& \alpha_k(v_k,z_{k-1}-z_k) - \frac{1}{2}\|z_{k-1}-z_k \|^2 + V_{z_{k-1}}(u) - V_{z_k}(u) \\
&\leq& \frac{1}{2} \alpha_k^2 \|v_k\|^2 + V_{z_{k-1}}(u) - V_{z_k}(u),
\end{eqnarray*}
where for the second equality we use stochastic mirror descent step, that is, $\alpha_kv_k + \nabla V_{z_{k-1}}(z_k)=0$ and 
for the last inequality we use the Fenchel-Young inequality $ \alpha_k(v_k,z_{k-1}-z_k) \leq \frac{1}{2}\alpha_k^2 \|v_k\|^2 + \frac{1}{2}\|z_{k-1}-z_k \|^2$. 

By taking expectation with respect to $I_k$ and using $\mathbb{E}_{I_k} \| v_k \|^2 = \|\nabla f(x_k) \|^2 + \mathbb{E}_{I_k} \|v_k - \nabla f(x_k)\|^2$, we have 
\begin{equation*}
\alpha_k(\nabla f(x_k),z_{k-1}-u) \leq V_{z_{k-1}}(u) - \mathbb{E}_{I_k} [ V_{z_k}(u) ] + \frac{1}{2}\alpha_k^2\|\nabla f(x_k)\|^2 + \frac{1}{2}\alpha_k^2\mathbb{E}_{I_k}\| v_k - \nabla f(x_k) \|^2.
\end{equation*}
This finishes the proof of Lemma \ref{mirror_descent_inequality}.
\end{proof}

\section{Modified AMSVRG for general convex problems}
We now intrdouce a modified AMSVRG (described in Figure \ref{multi_stage_mod}) that do not need the boundedness assumption for general convex problems.
\begin{figure}[h]
\begin{center}
\fbox{\rule{0cm}{0cm} 
\begin{tabular}{l}
{\bf Algorithm 3}$(w_0,\ (m_s)_{s\in \mathbb{Z}_+},\ \eta,\ (\alpha_{k+1})_{k\in \mathbb{Z}_+},\ (b_{k+1})_{k\in \mathbb{Z}_+},\ (\tau_k)_{k\in \mathbb{Z}_+} )$ 
\\ \hline \\
{\bf for} $s \leftarrow 0,\ 1,\ldots$ \\
\ \ \ \ \ \ $y_0 \leftarrow w_s,\ \ z_0 \leftarrow w_0$\\
\ \ \ \ \ \ $w_{s+1} \leftarrow {\bf Algorithm 1 }( y_0,\ z_0,\ m_s,\ \eta,\ (\alpha_{k+1})_{k\in \mathbb{Z}_+},\ (b_{k+1})_{k\in \mathbb{Z}_+},\ (\tau_k)_{k\in \mathbb{Z}_+})$\\
{\bf end} \\
\end{tabular}
\rule{0cm}{0cm}}
\end{center}
\caption{Modified AMSVRG}
\label{multi_stage_mod}
\end{figure}
We set $\eta, \alpha_{k+1},$ and $\tau_k$ as in (\ref{params}). 
Let $b_{k+1} \in \mathbb{Z}_+$ be the minimum values satisfying $4L \delta_{k+1}\alpha_{k+1} \leq p$ for small $p\ (e.g.\ 1/4)$.
Let $m_s = \left \lceil 4\sqrt{\frac{LV_{z_0}(x_*)}{\epsilon}} \right \rceil$.
From Thorem \ref{theorem1}, we get
\[ \mathbb{E}[ f(w_{s+1})  - f(x_*) ] \leq \epsilon + a( f(w_s) - f(x_*) ), \]
where $a = \frac{5}{2}p$.
Thus, it followis that,
\begin{eqnarray*}
 \mathbb{E}[ f(w_{s+1})  - f(x_*) ] &\leq& \sum_{t=0}^s a^t \epsilon + a^{s+1}( f(w_0) - f(x_*) )\\
&\leq& \frac{1}{1-a} \epsilon + a^{s+1}( f(w_0) - f(x_*) ).
\end{eqnarray*}
Hence, running the modified AMSVRG for $O\left(\log \frac{1}{\epsilon}\right)$ outer iterations achieves $\epsilon$-accurate solution in expectation,
and a complexity at each stage is
\begin{eqnarray*}
&&\ \ \ \ O\left( n + \sum_{k=0}^{m_s} b_{k+1} \right) \leq O\left( n + \frac{nm_s^2}{n+m_s} \right) \\
&&=O\left( n + \frac{nL}{\epsilon n + \sqrt{\epsilon L}} \right) = O\left( n + \min \left\{ \frac{L}{\epsilon}, n\sqrt{ \frac{L}{\epsilon}}\ \right\} \right),
\end{eqnarray*}
where we used the monotonicity of $b_{k+1}$ with respect to $k$ for the first inequality. 
Note that $V_{z_0}(x_*)$ is constant (i.e. $V_{w_0}(x_*)$), and $O$ hides this term.
From the above analysis, we derive the following theorem.
\begin{theorem} 
Consider the modified AMSVRG under Assumptions \ref{assumption_l_smooth}.
Let parameters be as above.
Then the overall complexity for obtaining $\epsilon$-accurate solution in expectation is 
\[ O\left( \left( n + \min \left\{ \frac{L}{\epsilon}, n\sqrt{ \frac{L}{\epsilon}}\ \right\} \right) \log \left(\frac{1}{\epsilon} \right) \right). \]
\end{theorem}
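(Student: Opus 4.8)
The plan is to combine the single-stage guarantee of Theorem~\ref{theorem1} with a geometric unrolling over the outer loop, crucially exploiting the fact that the modified scheme fixes $z_0 \leftarrow w_0$ instead of resetting it to $w_s$ at each stage. First I would observe that, because $z_0$ is held at the constant $w_0$ throughout, the quantity $V_{z_0}(x_*) = V_{w_0}(x_*)$ is a fixed constant, so the prescribed $m_s = \lceil 4\sqrt{LV_{z_0}(x_*)/\epsilon} \rceil$ is genuinely the same number at every stage and involves no iterate-dependent norm. This is exactly what lets us drop Assumption~\ref{assumption_boundedness}: in Algorithm~2 the term $V_{w_s}(x_*)$ could grow with $s$ unless the trajectory is confined, whereas here it cannot vary at all.

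Next I would invoke the \emph{first} inequality of Theorem~\ref{theorem1} (the one with an absolute error term), rather than its second, iterate-dependent form. Since $m_s \geq 4\sqrt{LV_{z_0}(x_*)/\epsilon}$ gives $(m_s+2)^2 \geq m_s^2 \geq 16LV_{z_0}(x_*)/\epsilon$, the leading term $\tfrac{16L}{(m_s+2)^2}V_{z_0}(x_*)$ is at most $\epsilon$. Writing $a = \tfrac{5}{2}p$ and identifying $y_0 = w_s$, $y_{m_s+1} = w_{s+1}$, this yields the one-step recursion
\[ \mathbb{E}[f(w_{s+1}) - f(x_*)] \leq \epsilon + a\,\mathbb{E}[f(w_s) - f(x_*)]. \]
Taking $p$ small (e.g.\ $p = 1/4$, so $a = 5/8 < 1$) and unrolling gives
\[ \mathbb{E}[f(w_{s+1}) - f(x_*)] \leq \frac{1}{1-a}\epsilon + a^{s+1}\bigl(f(w_0) - f(x_*)\bigr), \]
so after $s = O(\log(1/\epsilon))$ outer iterations the second term falls below $\epsilon$ and the expected gap is $O(\epsilon)$; a constant rescaling of $\epsilon$ absorbs the $\tfrac{1}{1-a}$ factor.

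It then remains to bound the per-stage work $O(n + \sum_{k=0}^{m_s} b_{k+1})$ and multiply by the stage count. Using monotonicity of $b_{k+1} = \lceil n(k+2)/(p(n-1)+k+2) \rceil$ in $k$, I would bound $\sum_{k=0}^{m_s} b_{k+1} \leq (m_s+1)\,b_{m_s+1} = O\bigl(n m_s^2/(n+m_s)\bigr)$, treating $p$ as a constant. Because $V_{z_0}(x_*)$ is constant, $m_s = O(\sqrt{L/\epsilon})$, and substituting yields
\[ \frac{n m_s^2}{n+m_s} = O\left( \frac{nL}{\epsilon n + \sqrt{\epsilon L}} \right) = O\left( \min\left\{ \frac{L}{\epsilon},\, n\sqrt{\frac{L}{\epsilon}} \right\} \right), \]
where the last step is the harmonic-mean identity already recorded in the paper. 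Multiplying this per-stage cost by the $O(\log(1/\epsilon))$ outer iterations gives the claimed overall complexity.

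The main obstacle is conceptual rather than computational: one must verify that fixing $z_0 = w_0$ truly decouples $m_s$ from the (possibly unbounded) trajectory, so that the first inequality of Theorem~\ref{theorem1} applies with a genuine additive constant $\epsilon$ in place of the relative bound used in the bounded version. Once this decoupling is justified, the geometric unrolling and the complexity estimate are routine algebra.
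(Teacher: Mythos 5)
Your proposal is correct and follows essentially the same route as the paper's own argument: apply the first inequality of Theorem \ref{theorem1} with $m_s = \lceil 4\sqrt{LV_{z_0}(x_*)/\epsilon}\rceil$ to get the recursion $\mathbb{E}[f(w_{s+1})-f(x_*)] \leq \epsilon + \tfrac{5}{2}p\,(f(w_s)-f(x_*))$, unroll it geometrically, and bound the per-stage cost by $O(n + nm_s^2/(n+m_s))$ using monotonicity of $b_{k+1}$. Your added observation that fixing $z_0 = w_0$ makes $V_{z_0}(x_*)$ stage-independent (and hence removes the need for Assumption \ref{assumption_boundedness}) is exactly the point the paper relies on, merely stated more explicitly.
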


\end{document}